\documentclass{article}

\usepackage{microtype}
\usepackage{graphicx}
\usepackage{subfigure}
\usepackage{booktabs} 
\usepackage{hyperref}
\usepackage{fullpage}

\usepackage{hyperref}
\hypersetup{
	colorlinks=true,
	linkcolor=blue,
	citecolor=blue,
	filecolor=blue,      
	urlcolor=blue,
}

\usepackage{amsmath}
\usepackage{amssymb}
\usepackage{mathtools}
\usepackage{amsthm}
\usepackage{float}

\usepackage{booktabs}
\usepackage{multirow}
\usepackage{diagbox}
\usepackage{hhline}
\usepackage{bbm}
\usepackage{extarrows}
\usepackage{algorithm,algpseudocode}
\usepackage{tikz}
\usetikzlibrary{matrix,arrows.meta,calc,positioning}
\usepackage{subcaption}

\usepackage[capitalize,noabbrev]{cleveref}

\numberwithin {equation} {section}

\title{BlockRR: A Unified Framework of RR-type Algorithms for Label Differential Privacy}
\author{
Haixia Liu\thanks{School of Mathematics and Statistics  \& Institute of Interdisciplinary Research for Mathematics and Applied Science \& Hubei Key Laboratory of Engineering Modeling and Scientific Computing, Huazhong University of Science and Technology, Wuhan, 430074, China. Email: liuhaixia@hust.edu.cn. The work of H.X. Liu was supported in part by Interdisciplinary Research Program of HUST 2024JCYJ005, National Key Research and Development Program of China 2023YFC3804500.},\ \   
Yi Ding\thanks{School of Mathematics and Statistics, Huazhong University of Science and Technology, Wuhan, 430074,  China. Email: m202470039@hust.edu.cn.} }

\usepackage{amsmath}
\usepackage{amssymb}
\usepackage{mathtools}
\usepackage{amsthm}

\usepackage{multirow}

\newcommand{\calS}{\mathcal{S}}
\newcommand{\bbE}{\mathbb{E}}
\newcommand{\bbI}{\mathbb{I}}
\newcommand{\calY}{\mathcal{Y}}

\allowdisplaybreaks
\numberwithin{equation}{section}

\theoremstyle{plain}
\newtheorem{theorem}{Theorem}[section]

\newtheorem{lemma}[theorem]{Lemma}

\theoremstyle{definition}
\newtheorem{definition}[theorem]{Definition}

\theoremstyle{remark}
\newtheorem{remark}[theorem]{Remark}

\begin{document}
\maketitle

\begin{abstract}
  In this paper, we introduce \textit{BlockRR}, a novel and unified randomized-response mechanism for label differential privacy. This framework generalizes existed RR-type mechanisms as special cases under specific parameter settings, which eliminates the need for separate, case-by-case analysis. Theoretically, we prove that BlockRR satisfies $\epsilon$-label DP. We also design a partition method for BlockRR based on a weight matrix derived from label prior information; the parallel composition principle ensures that the composition of two such mechanisms remains $\epsilon$-label DP. Empirically, we evaluate BlockRR on two variants of CIFAR-10 with varying degrees of class imbalance. Results show that in the high-privacy and moderate-privacy regimes ($\epsilon \leq 3.0$), our propsed method gets a better balance between test accuaracy and the average of per-class accuracy. In the low-privacy regime ($\epsilon \geq 4.0$), all methods reduce BlockRR to standard RR without additional performance loss.
\end{abstract}

\section{Introduction}
In many real-world scenarios, such as ad-click prediction or survey analysis \cite{10.1007/11761679_29, 10.1007/11681878_14}, the features (e.g., user interactions) are not sensitive, but the labels (e.g., clicks or private responses) are \cite{10.1145/2976749.2978318, 10.5555/1953048.2021036}. Label Differential Privacy (Label DP) \cite{pmlr-v19-chaudhuri11a} addresses this by applying privacy guarantees solely to the labels, unlike standard Differential Privacy (DP) \cite{dwork2006differential}, which protects both features and labels. This narrower scope allows for better model utility under the same privacy budget \cite{ghazi2021deep, esfandiari2022labeldp, Ghazi2022RegressionWL}.

The randomized response (RR) mechanism \cite{warner1965rr}—originally conceived for social surveys to mitigate response bias—is a prevalent technique in label differential privacy. Here, it functions as a label perturbation mechanism: each true label is mapped to a privatized output via a randomized process that enforces the privacy guarantee. Models are subsequently trained on these perturbed labels.

The pursuit of improved utility has led to numerous variants of the RR mechanism, tailored to different learning tasks. These include RRWithPrior \cite{ghazi2021deep} for classification, and RRonBins \cite{Ghazi2022RegressionWL} and RPWithPrior \cite{liu2025rpwithprior} for regression. Although these specialized approaches demonstrate effectiveness, the field lacks comprehensive benchmarks for direct comparison. Consequently, practitioners face a significant challenge: without quantitative evaluations, identifying the optimal algorithm for a new scenario remains difficult.

We address this gap by proposing BlockRR, a unified framework for Label DP that generalizes RR-type mechanisms. The core idea is to partition the label spaces (original and privatized) into blocks, applying tailored RR transformations between specific block pairs. This design subsumes existing methods—such as RRWithPrior and RRonBins—as special cases defined by specific hyper-parameter settings (i.e., particular block structures). Thus, BlockRR provides a common design space where algorithm comparison reduces to an analysis of hyper-parameter choices, enabling systematic benchmarking and selection. In summary, our contributions can be outlined as follows:

(1)  \textbf{We introduce a novel BlockRR mechanism.} Specifically, we partition the original label space $\calS$ into two disjoint subsets: a majority set $\calS_1$ and a minority set $\calS_2$. Using a local perturbation mapping $B(\cdot)$, we then define privatized candidate sets $\tilde{\calS}_1$ and $\tilde{\calS}_2$. This creates four mapping regions (Figure~\ref{fig:label_dp_block1}): the diagonal regions ($\calS_1 \to \tilde{\calS}_1$ and $\calS_2 \to \tilde{\calS}_2$) follow standard RR, while the off-diagonal regions use a uniform or block-uniform distribution. The block-uniform design amplifies the influence of majority labels and mitigates that of minority labels.
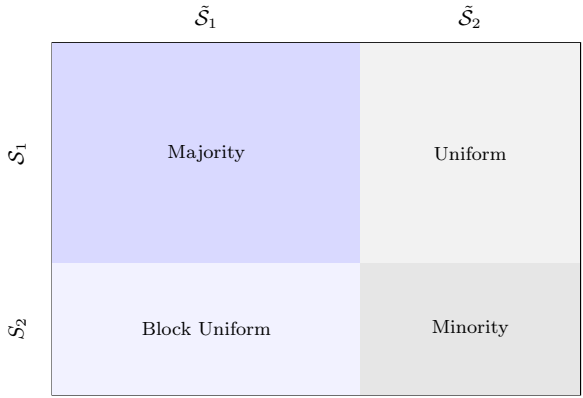
\begin{figure}[ht]
\centering
\begin{minipage}{0.48\textwidth}
    \centering
    \resizebox{\textwidth}{!}{  
    \begin{tikzpicture}[scale=1.3]

    \draw[thick] (0, 0) rectangle (6, 4);

    \draw[thick] (0, 2.5) -- (6, 2.5);  
    \draw[thick] (2.5, 0) -- (2.5, 4);  

    \fill[blue!15] (0, 1.5) rectangle (3.5, 4);  
    \fill[blue!5]  (0, 0)   rectangle (3.5, 1.5);
    \fill[gray!10]  (3.5, 1.5) rectangle (6, 4);  
    \fill[gray!20] (3.5, 0) rectangle (6, 1.5);  

    \node[rotate=90] at (-0.4, 2.75) {\small $\calS_1$};
    \node[rotate=90] at (-0.4, 0.75) {\small $S_2$};

    \node at (1.75, 4.3) {\small $\tilde{\calS}_1$};
    \node at (4.75, 4.3) {\small $\tilde{\calS}_2$};

    \node at (1.75, 2.75) {\footnotesize Majority};
    \node at (1.75, 0.75) {\footnotesize Block Uniform};
    \node at (4.75, 2.75) {\footnotesize Uniform};
    \node at (4.75, 0.75) {\footnotesize Minority};

    \end{tikzpicture}
    }
\end{minipage}    
\caption{Quadrant view of the unified label perturbation mechanism. } 
\label{fig:label_dp_block1}

\end{figure}

(2) \textbf{Our algorithm is a unified mechanism of RR-type algorithms under label DP.} Existing RR-type algorithms, including RR, RRWithPrior for classification tasks and RRonBins, RPWithPrior for regression tasks, correspond to specific configurations of BlockRR, thereby unifying them within our framework and eliminating the need for separate, case-by-case analysis.

(3) \textbf{We evaluate all methods on different tasks to guide practitioners to identify an appropriate mechanism for a new application scenario.} we evaluate BlockRR on two variants of CIFAR-10 ($\text{CIFAR-10}_1$ and $\text{CIFAR-10}_2$). Results show that in the high-privacy regime ($\epsilon \leq 3.0$), our method matches or surpasses RRWithPrior in overall accuracy while significantly mitigating its class collapse and high variance issues. At the same time, BlockRR maintains more balanced per-class performance. In the low-privacy regime ($\epsilon \geq 4.0$), all methods converge as $\calS_1 = \calS$ and $\calS_2 = \emptyset$, reducing BlockRR to standard RR.

The remainder of this paper is structured as follows: Section \ref{sec:exist_rr} reviews the concepts on differential privacy and label differential privacy and some existing RR-type algorithms. Sections \ref{sec:unified_algo} and \ref{sec:main_idea} provide a unified framework of RR-type algorithms under label DP and the main idea, respectively. Numerical implementations are discussed in Section \ref{sec:numerical}. Finally, we conclude with our findings and highlight the limitations of this work in Section \ref{sec:conclusion}. 

\section{Existing RR-type Algorithms}\label{sec:exist_rr}

This section reviews RR-type algorithms, which provide privacy guarantees for personal data. We first outline the concepts of differential privacy and label differential privacy before detailing specific algorithms.

\begin{definition}[Differential Privacy, DP \cite{dwork2006differential}]
    Let \(\epsilon, \delta \geq 0\). A randomized mechanism \(A\) is said to be \((\epsilon, \delta)\)-differentially private if, for any two adjacent datasets \(D, D'\), and for any subset \(S\) of the output space of \(A\), the probability inequality holds, \(\Pr[A(D) \in S] \leq e^\epsilon \cdot \Pr[A(D') \in S] + \delta\). If \(\delta = 0\), then the mechanism \(A\) is said to be \(\epsilon\)-differentially private (\(\epsilon\)-DP).
\end{definition} 
\begin{definition}[Label Differential Privacy, Label DP \cite{pmlr-v19-chaudhuri11a}]
    Let \(\epsilon, \delta \geq 0\). A randomized mechanism \(A\) is said to be \((\epsilon, \delta)\)-label differentially private if, for any two datasets \(D, D'\) that differ by a single label, and for any subset \(S\) of the output space of \(A\), the probability inequality holds, \(\Pr[A(D) \in S] \leq e^\epsilon \cdot \Pr[A(D') \in S] + \delta\). If \(\delta = 0\), then the mechanism \(A\) is said to be \(\epsilon\)-label differentially private (\(\epsilon\)-Label DP).
\end{definition}

Next, we introduce the RR mechanism and its key variants. 
\begin{definition}[Randomized Response, RR \cite{10.1561/0400000042}]
    Let \(K\) be a positive integer, and define \([K] := \{0,1,...,K-1\}\). Let \(\epsilon\) be a parameter, and \(y \in [K]\) be the true value, as input to \(\mathrm{RR}_\epsilon\). When an observer wants to query the value of \(y\), \(\mathrm{RR}_\epsilon\) outputs a random value \(\tilde{y} \in [K]\), which satisfies the following probability distribution:
\begin{align*}
    \Pr[\tilde{y} = \hat{y}] = 
    \begin{cases}
        \dfrac{e^\epsilon}{e^\epsilon+K-1}, & \text{for} \; \hat{y} = y, \\
        \dfrac{1}{e^\epsilon+K-1}, & \text{otherwise}.
    \end{cases}
\end{align*}
RRWithPrior is a variant of RR that uses the prior distribution of the true label \(y\) across the dataset. The algorithm outputs the top \(k\) classes with the highest probability, forming the candidate set \(\calY_k\) \((\calY_k \subseteq [K])\).
\end{definition}
\begin{definition}[Randomized Response with Prior, RRWithPrior \cite{ghazi2021deep}]
    Let \(K\) be a positive integer, and \([K] = \{1, 2, \dots, K\}\). Let \(\epsilon\) be a parameter, and \(y \in [K]\) be the true value. Let \(y \sim \mathbf{p} = [p_1\ p_2\ \dots\ p_K]\), where \(\mathbf{p}\) is the prior distribution of the true label \(y\) across the dataset. When an observer wants to query the value of \(y\), the output is a random value \(\tilde{y} \in \calY_k\) (where \(\calY_k\) consists of the indices of the top \(k\) components), satisfying the following probability distribution:
\begin{align*}
    \Pr[\tilde{y} = \hat{y} \mid y \in \calY_k] = 
    \begin{cases}
        \dfrac{e^\epsilon}{e^\epsilon+k-1}, & \text{for} \; \hat{y} = y, \\
        \dfrac{1}{e^\epsilon+k-1}, & \text{for} \; \hat{y} \in \calY_k \setminus \{y\},
    \end{cases}
\end{align*}
If $y \notin \calY_k$, it outputs a value of $\calY_k$ uniformly.
\end{definition}

While RR and RRWithPrior are primarily designed for privacy protection in classification tasks, the variants RRonbins and RPwithPrior extend the randomized-response framework to regression tasks.
\begin{definition}[Randomized Response on Bins, RRonbins \cite{Ghazi2022RegressionWL}]
    Let $\epsilon$ be a privacy parameter and $\calY$ the set of true labels. Let $\Phi: \calY \to \tilde{\calY}$ be a randomized mechanism, where $\tilde{\calY}$ is a finite set of size $|\tilde{\calY}|$. The output distribution of $\tilde{y} = \mathrm{RRonBins}(y)$ is defined as:
\begin{equation*}
    \Pr[\tilde{y} = \hat{y}] = 
    \begin{cases}
        \dfrac{e^\epsilon}{e^\epsilon + \lvert \tilde{\calY} \rvert - 1}, & \text{for} \; \hat{y} = \Phi(y), \\
        \dfrac{1}{e^\epsilon + \lvert \tilde{\calY} \rvert - 1}, & \text{otherwise}.
    \end{cases}
    \label{eq:third_eq}
\end{equation*}
\end{definition} 
\begin{definition}[Regression Privacy with Prior, RPwithPrior \cite{liu2025rpwithprior}]
    Let $\epsilon > 0$ be a privacy parameter and $\delta > 0$ a fixed positive constant. Consider a randomized mechanism $\mathcal{A}: \mathcal{Y} \to \tilde{\mathcal{Y}}$, where $y \in \mathcal{Y}$ and $\tilde{y} \in \tilde{\mathcal{Y}}$. For an optimal interval $\mathcal{I} = [A_1, A_2]$, define the neighborhoods $N_y = [y - \delta, y + \delta]$ and $N_\mathcal{I} = \bigcup_{y \in \mathcal{I}} N_y$. Let $\gamma = 2\delta + e^{-\epsilon}(A_2 - A_1)$ and $f_{\tilde{Y} \mid Y}(\tilde{y} \mid y)$ denote the conditional probability density of $\tilde{Y}$ given $Y = y$. Then for any $y \in \mathcal{I}$, $f_{\tilde{Y} \mid Y}(\tilde{y} \mid y)$ satisfies:
\[
    f_{\tilde{Y} \mid Y}(\tilde{y} \mid y) = 
    \begin{cases}
         1/\gamma, & \text{for} \; \tilde{y} \in N_y, \\
         e^{-\epsilon}/\gamma, & \text{for} \; \tilde{y} \in N_\mathcal{I} \setminus N_y, \\
         0, & \text{otherwise}.
    \end{cases}
\]
If \(y \notin \mathcal{I}\), \(f_{\tilde{Y} \mid Y}(\tilde{y} \mid y) = \dfrac{1}{2\delta + (A_2 - A_1)}, \forall\tilde{y} \in N_\mathcal{I}\).
\end{definition}

\section{Unified Framework of RR-type algorithms under Label DP}\label{sec:unified_algo}
A number of algorithms have been designed to achieve label DP. While they may seem different on the surface, these methods all share a common underlying design principle. To maximize utility, a label is mapped to itself (or a close surrogate) with high probability, with the remaining probability distributed across other labels. In the following, we introduce a unified framework for RR-type algorithms under label differential privacy.

\subsection{BlockRR}
We propose \emph{BlockRR}, a general Label DP framework designed to unify existing RR-type methods and enable more flexible randomization strategies. Its core mechanism partitions the true label set into disjoint subsets (determined in Subsection \ref{subsec:partition}). Formally, let \(\calS\) be the original label set. We partition it into two disjoint subsets, \(\calS_1\) and \(\calS_2\), where \(\calS_1\) typically contains majority or important labels and \(\calS_2\) contains minority or less important labels. Let \(\tilde{\calS} \subseteq \calS\) denote the privatized label set, and let \( B:\calS \to \tilde{\calS}\) be a randomized mapping function with $y\in B(y),\forall y\in\calS$. This partitions the output set into \(\tilde{\calS}_1 = \tilde{\calS} \cap B(\calS_1)\) and \(\tilde{\calS}_2 = \tilde{\calS} \setminus \tilde{\calS}_1\). As illustrated in Figure~\ref{fig:label_dp_block1}, the Cartesian product of the true and privatized label sets is thus divided into four regions. We proceed by constructing the overall randomized algorithm in a modular fashion, designing the specific perturbation rules for each region.

In practice, majority labels often dominate the dataset. To improve the performance of Label DP algorithms, we focus on leveraging the information-rich majority labels while appropriately mitigating the influence of minority labels, which may carry limited useful information. To this end, we introduce a subset $\Delta\subseteq\tilde{\calS}_1$ with cardinality $l$, which is a subset of the most dominated labels. 

We now define the transition probability (or probability density function) for each block. Beginning with the bottom-left block, corresponding to the pair $(\calS_2,\tilde{\calS}_1)$, we define
\begin{align*}
    &\Pr(\tilde{Y}=\tilde{y},\tilde{y}\in \Delta|Y=y,y\in \calS_2)=\frac{1}{|\tilde{\calS}|},\\ &\Pr(\tilde{Y}=\tilde{y},\tilde{y}\in\tilde{\calS}_1\setminus \Delta|Y=y,y\in \calS_2)=\beta.
\end{align*}

Similarly, for the bottom-right block, corresponding to the pair $(\calS_2,\tilde{\calS}_2)$, we define
\begin{align*}
    &\Pr(\tilde{Y}=\tilde{y},\tilde{y}\in B(y)|Y=y,y\in \calS_2)=e^\epsilon\gamma,\\ &\Pr(\tilde{Y}=\tilde{y},\tilde{y}\in\tilde{\calS}_2\setminus B(y)|Y=y,y\in \calS_2)=\gamma.
\end{align*}
For the top-left block (corresponding to the pair $\calS_1$ and $\tilde{\calS}_1$) and the top-right block (corresponding to the pair $\calS_1$ and $\tilde{\calS}_2$), the function is specified as follows: 
\begin{align*}
        &\Pr(\tilde{Y}=\tilde{y},\tilde{y}\in B(y)|Y=y,y\in \calS_1)=e^\epsilon\beta,\\ &\Pr(\tilde{Y}=\tilde{y},\tilde{y}\in\tilde{\calS}_1\setminus B(y)|Y=y,y\in \calS_1)=\beta,\\
        &\Pr(\tilde{Y}=\tilde{y},\tilde{y}\in\tilde{\calS}_2|Y=y,y\in \calS_1)=\gamma.
\end{align*}
By the probabilistic property $\bbE_{\tilde{y}}[\Pr(\tilde{Y}=\tilde{y}|Y=y)]=1,\ \forall y\in\calS$, we have the following system of linear equations
\begin{equation}\label{eq:linear_system}
    \begin{cases}
            (e^\epsilon |B(y)| + |\tilde{\calS}_1| - |B(y)| )\cdot \beta + |\tilde{\calS}_2| \cdot \gamma = 1,\ \forall y\in\calS_1, \\
            (|\tilde{\calS}_1| - |\Delta|) \cdot \beta + (e^\epsilon |B(y)| + |\tilde{\calS}_2| - |B(y)|) \cdot \gamma \\
            \hspace{4.5cm}= 1-\frac{|\Delta|}{|\tilde{\calS}|},\ \forall y\in\calS_2,
        \end{cases}
\end{equation}
where $\lvert \cdot \rvert$ is the cardinality of a set. Assuming $|B(y)|$ is constant for all $y \in \calS$ and $|\Delta| = l$, solving \eqref{eq:linear_system} yields:
\begin{equation}\label{eq:beta_gamma}
    \beta = \dfrac{\beta_1}{\kappa},\ \gamma = \dfrac{\gamma_1}{\kappa},
\end{equation}
where \begin{equation*}
\begin{split}
    \beta_1=&(e^\epsilon - 1)|B(y)| + \frac{l \cdot |\tilde{\calS}_2|}{|\tilde{\calS}|},\\
    \gamma_1=&((e^\epsilon-1)|B(y)| + l) - \frac{l}{|\tilde{\calS}|} \cdot ((e^\epsilon-1)|B(y)| + |\tilde{\calS}_1|),\\
    \kappa=&((e^\epsilon-1)|B(y)| + |\tilde{\calS}_1|) \cdot ((e^\epsilon-1)|B(y)| + |\tilde{\calS}_2|)\\
    &- (|\tilde{\calS}_1| - l) \cdot |\tilde{\calS}_2|.
    \end{split}
\end{equation*}
 For these four mapping regions (Figure~\ref{fig:label_dp_block1}), the diagonal regions ($\calS_1 \to \tilde{\calS}_1$ and $\calS_2 \to \tilde{\calS}_2$) follow standard RR, while the off-diagonal regions use a uniform or block-uniform distribution. This construction ensures both normalization, $\bbE_{\tilde{y}}[\Pr(\tilde{Y}=\tilde{y}|Y=y)]=1$ and $\epsilon$-label differential privacy. For completeness, the whole algorithm is shown in Algorithm~\ref{alg:algorithm1}, denoed as BlockRR.
\begin{algorithm}[!tbp] \caption{BlockRR$_\epsilon(y)$}
  \label{alg:algorithm1}
  \begin{algorithmic}[1]
    \State {\bfseries Input:} $\calS$, $\calS = \calS_1 \cup \calS_2$, $\calS_1 \cap \calS_2 = \emptyset$, $\tilde{\calS} \subseteq \calS$, $\tilde{\calS}_1 = \tilde{\calS} \cap B(\calS_1)$ and $\tilde{\calS}_2=\tilde{\calS}\setminus \tilde{\calS}_1$, $\Delta \subseteq \tilde{\calS}_1$, privacy parameter $\epsilon$, cardinality $l$ and  mapping $B$.
    \State Compute $\beta$ and $\gamma$ by Equation \eqref{eq:beta_gamma},
    \If{$y \in \calS_2$}
    \State Sample $\tilde{y} \in \tilde{\calS}$ according to
        \[ \Pr(\tilde{Y} = \tilde{y} \mid Y = y) = 
        \begin{cases}
        \frac{1}{|\tilde{\calS}|},&\text{if } \tilde{y}\in \Delta,\\
        e^{\epsilon} \cdot \gamma,& \text{if } \tilde{y} \in B(y), \\
        \gamma,& \text{if } \tilde{y} \notin B(y)\cup \Delta,
        \end{cases}\quad 
        \]
\ElsIf{$y \in \calS_1$}
    \If{$\tilde{y} \in \tilde{\calS}_2$}
        \State Sample with probability $\Pr(\tilde{Y} = \tilde{y} \mid Y = y) = \gamma$
    \ElsIf{$\tilde{y} \in \tilde{\calS}_1$}
        \State Sample with probability:
        \[
        \Pr(\tilde{Y} = \tilde{y} \mid Y = y) = 
        \begin{cases}
        e^{\epsilon} \cdot \beta & \text{if } \tilde{y} \in B(y), \\
        \beta & \text{if } \tilde{y} \notin B(y).
        \end{cases}\quad
\]
    \EndIf
\EndIf
\State {\bfseries Output:} Randomized labels.
  \end{algorithmic}
\end{algorithm}

Next, we will illustrate that the introduction of $\Delta\subseteq\tilde{\calS}_1$ is designed to enhance the influence of majority labels and mitigate the impact of minority labels.
\begin{lemma}\label{lem:temper}
    Under the assumptions that $|B(y)|$ is constant over $y \in \calS$ and that the partitions $\calS_1$, $\calS_2$, $\tilde{\calS}_1$, $\tilde{\calS}_2$ are fixed, $\beta$ is monotonically increasing in $l$ and $\gamma$ is monotonically decreasing in $l$. Furthermore, $\max_l\gamma=\min_l\beta$.
\end{lemma}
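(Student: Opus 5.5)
The plan is to make everything explicit as a function of the single variable $l$, treating all other quantities as constants. Introduce the abbreviations $a := (e^\epsilon-1)|B(y)|$ (constant by assumption), $n_1 := |\tilde{\calS}_1|$, $n_2 := |\tilde{\calS}_2|$ and $n := |\tilde{\calS}| = n_1+n_2$. Since the partitions are fixed, $a,n_1,n_2,n$ do not depend on $l$, and $l$ ranges over $\{0,1,\dots,n_1\}$ because $\Delta\subseteq\tilde{\calS}_1$. I will assume $\epsilon>0$ and $|B(y)|\ge 1$, so that $a>0$.

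The first step is to rewrite the formulas in \eqref{eq:beta_gamma}. The denominator expands as
\[
\kappa=(a+n_1)(a+n_2)-(n_1-l)n_2=a(a+n)+l\,n_2 .
\]
The numerators become
\[
\beta_1=a+\frac{l\,n_2}{n},\qquad \gamma_1=a+l-\frac{l}{n}(a+n_1)=a+\frac{l\,(n_2-a)}{n}.
\]
Hence $\beta(l)$ and $\gamma(l)$ are linear-fractional functions of $l$, with the same denominator $\kappa(l)$. This denominator is strictly positive, so both functions are well defined on $[0,n_1]$.

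The second step uses the fact that a function $(p+ql)/(r+sl)$ with $r+sl>0$ is monotone. Its derivative has the sign of $qr-ps$. For $\beta$, this quantity is
\[
\frac{n_2}{n}\,a(a+n)-a\,n_2=\frac{a^2n_2}{n}\ge 0,
\]
so $\beta$ is nondecreasing in $l$, and strictly increasing when $\tilde{\calS}_2\neq\emptyset$. For $\gamma$, it is
\[
\frac{n_2-a}{n}\,a(a+n)-a\,n_2=\frac{a}{n}\big(a n_2-a^2-an\big)=-\frac{a^2(a+n_1)}{n}<0,
\]
so $\gamma$ is strictly decreasing in $l$. The only real obstacle in the proof is this algebra: making sure $\kappa$ collapses to $a(a+n)+l n_2$, and tracking the sign in the $\gamma$ numerator, where the coefficient $n_2-a$ may be negative. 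The derivative computation handles that case uniformly. I would double-check it by confirming that the cross terms $a n_2$ cancel.

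Finally, monotonicity shows that $\max_l\gamma$ and $\min_l\beta$ are both attained at $l=0$. There,
\[
\beta(0)=\gamma(0)=\frac{a}{a(a+n)}=\frac{1}{(e^\epsilon-1)|B(y)|+|\tilde{\calS}|},
\]
which is exactly the standard RR probability. This gives $\max_l\gamma=\min_l\beta$. I would close with a remark on the degenerate case $n_2=0$: then $\beta$ is constant in $l$, so the monotonicity of $\beta$ is only non-strict.
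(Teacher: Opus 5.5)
Your proof is correct, and it takes a somewhat different route from the paper's. It is also more careful.

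\textbf{The paper's route.} The paper argues that $\beta$ increases because ``its numerator increases with $l$ and its denominator decreases with $l$.'' It then gets the monotonicity of $\gamma$ from the first normalization equation, which in your notation reads $\gamma=\bigl(1-(a+n_1)\beta\bigr)/n_2$. Finally, it simply asserts that $\max_l\gamma=\min_l\beta$ at $l=0$.

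\textbf{Where your version is stronger.} Your simplification $\kappa=a(a+n)+l\,n_2$ shows that the paper's claim about the denominator is false: $\kappa$ increases with $l$ whenever $n_2>0$. So the numerator and denominator of $\beta$ both grow, and the monotonicity of the quotient is not automatic. Your cross-term check $qr-ps=a^2n_2/n\ge 0$ is exactly what is needed to justify that step. Likewise, your explicit evaluation $\beta(0)=\gamma(0)=1/(a+n)$ supplies the equality that the paper's last sentence takes for granted.

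\textbf{Where the paper's route is more economical.} For $\gamma$, once $\beta$ is known to be increasing, the normalization identity gives the decrease of $\gamma$ with no further algebra, provided $\tilde{\calS}_2\neq\emptyset$. You could replace your second derivative computation with that one line. On the other hand, your direct computation never divides by $n_2$, so it also covers the degenerate case $\tilde{\calS}_2=\emptyset$.
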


We leave the proof of Lemma \ref{lem:temper} to Appendix \ref{appendix:gamma-beta-proof}. 
As formalized in Lemma \ref{lem:temper}, the cardinality $l$ governs the strength of minority-label mitigation. The lemma shows that when  $l=0\  (\mathrm{i.e.}\ \Delta=\emptyset)$, mitigation is entirely absent; when $l=|\tilde{\calS}_1|\ (\mathrm{i.e.}\ \Delta=\tilde{\calS}_1)$, mitigation is strongest. Consequently, the parameter $\gamma$ is maximized and $\beta$ is minimized at $l=0$, and under the condition $\beta=\gamma$, BlockRR reduces precisely to the classical Randomized Response algorithm.

In addition, BlockRR serves as a unified framework. The following theorem demonstrates that all RR-type algorithms discussed in Section~\ref{sec:exist_rr} correspond to specific configurations of BlockRR, thereby unifying them within our framework and eliminating the need for separate, case-by-case analysis.

\begin{theorem}
     Our proposed algorithm, BlockRR, can recover various existing mechanisms under different parameter configurations:
     \begin{table}[H]
    \centering
    \caption{Unified schemes under various parameter configurations, where $\Phi$ is in Equation~\eqref{eq:third_eq}.}\vspace{-2mm}
    \label{tab:unifty}
    \resizebox{0.48\textwidth}{!}{
    \begin{tabular}{c|cccccc}
    \toprule
    \textbf{Method} & $\calS_1$ & $\calS_2$ & $\tilde{\calS}_1$ &$\tilde{\calS}_2$ & $B$ & $l$ \\
    \midrule
    \textbf{RR} & $\calS$ & $\emptyset$ & $B(\calS_1)$ & $\emptyset$ & $I$ & -- \\
    \textbf{RRWithPrior} & $\calS_1$ & $\calS_2$ & $B(\calS_1)$ & $\emptyset$ & $I$ & $|\tilde{\calS_1}|$ \\
    \textbf{RRonBins} & $\calS$ & $\emptyset$ & $B(\calS_1)$ & $\emptyset$ & $\Phi$ & -- \\
    \textbf{RPwithPrior} & $\calS_1$ & $\calS_2$ & $B(\calS_1)$ & $\emptyset$ & $x \to [x - \delta, x + \delta]$ &$|\tilde{\calS_1}|$ \\
    \bottomrule
    \end{tabular}\vspace{-2mm}
    }
    \end{table}
\end{theorem}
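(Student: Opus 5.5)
The plan is to verify each row of Table~\ref{tab:unifty} directly. For each configuration I substitute it into the linear system \eqref{eq:linear_system}, or into the closed form \eqref{eq:beta_gamma}, to obtain $\beta$ and $\gamma$. I then read off the transition probabilities from Algorithm~\ref{alg:algorithm1} and compare them with the corresponding definition in Section~\ref{sec:exist_rr}. No privacy argument is needed, since the statement only claims distributional equality.

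\textbf{Rows with $\tilde{\calS}_2=\emptyset$.} In every row $\tilde{\calS}_2=\emptyset$, so the $\gamma$-blocks (top-right and bottom-right) are empty and $\gamma$ never enters the output distribution. The first equation of \eqref{eq:linear_system} then reduces to
\[
\beta=\frac{1}{(e^\epsilon-1)|B(y)|+|\tilde{\calS}_1|}.
\]

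\textbf{RR and RRonBins.} Here $\calS_2=\emptyset$, so only the top-left block is active.
\begin{itemize}
\item For RR, take $B=I$, so $B(y)=\{y\}$ and $\tilde{\calS}_1=\calS=[K]$. Then $\beta=1/(e^\epsilon+K-1)$. The output $y$ has probability $e^\epsilon\beta$ and every other label has probability $\beta$, which is exactly $\mathrm{RR}_\epsilon$.
\item For RRonBins, take $B=\Phi$ as a deterministic singleton map $y\mapsto\{\Phi(y)\}$, with $\tilde{\calS}_1=\Phi(\calS)=\tilde{\calY}$. The same computation gives $e^\epsilon/(e^\epsilon+|\tilde{\calY}|-1)$ at $\Phi(y)$ and $1/(e^\epsilon+|\tilde{\calY}|-1)$ elsewhere.
\item One caveat must be handled: the framework requires $y\in B(y)$ and $\tilde{\calS}\subseteq\calS$, which fails for $\Phi$. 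I will either relabel each bin by a representative in $\Phi^{-1}$, or remark that the constraint is only used to define $\tilde{\calS}_1$. In both cases, $l$ plays no role because there is no $\calS_2$.
\end{itemize}

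\textbf{RRWithPrior.} Take $\calS_1=\calY_k$, $\calS_2=[K]\setminus\calY_k$, $B=I$ and $\tilde{\calS}_1=\calY_k$, so $|\tilde{\calS}_1|=k$ and $\tilde{\calS}=\calY_k$.
\begin{itemize}
\item For $y\in\calS_1$, we get $\beta=1/(e^\epsilon+k-1)$. This matches the RRWithPrior probabilities $e^\epsilon/(e^\epsilon+k-1)$ on $y$ and $1/(e^\epsilon+k-1)$ on the rest of $\calY_k$.
\item For $y\in\calS_2$, take $l=k$, so $\Delta=\tilde{\calS}_1=\tilde{\calS}$. The bottom-left rule then assigns each $\tilde y\in\Delta$ probability $1/|\tilde{\calS}|=1/k$, which is uniform on $\calY_k$, as the definition requires.
\item A consistency check: the second equation of \eqref{eq:linear_system} reads $0\cdot\beta=1-k/k=0$, so it holds trivially. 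I will also point out that Lemma~\ref{lem:temper}'s extreme case $l=|\tilde{\calS}_1|$ is exactly this one.
\end{itemize}

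\textbf{RPwithPrior.} This is the main obstacle, because it is continuous: cardinalities must be replaced by Lebesgue measures and probabilities by densities. I will state that the framework is read with $|\cdot|$ as measure and set $B(x)=N_x=[x-\delta,x+\delta]$, so $|B(y)|=2\delta$ is constant. Take $\calS_1=\mathcal{I}$ and $\tilde{\calS}_1=B(\mathcal{I})=N_{\mathcal{I}}$, which has measure $2\delta+(A_2-A_1)$.
\begin{itemize}
\item The $\beta$ formula becomes $\beta=1/\bigl((e^\epsilon-1)2\delta+2\delta+A_2-A_1\bigr)=e^{-\epsilon}/\gamma_{\rm RP}$, where $\gamma_{\rm RP}=2\delta+e^{-\epsilon}(A_2-A_1)$.
\item Hence $e^\epsilon\beta=1/\gamma_{\rm RP}$ on $N_y$ and $\beta=e^{-\epsilon}/\gamma_{\rm RP}$ on $N_{\mathcal{I}}\setminus N_y$, matching the definition.
\item For $y\notin\mathcal{I}$, take $\Delta=\tilde{\calS}_1$. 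The block-uniform rule then gives density $1/|N_{\mathcal{I}}|=1/(2\delta+A_2-A_1)$, which is the prescribed one.
\end{itemize}

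The care needed is in justifying the measure-theoretic reinterpretation of the normalization $\bbE_{\tilde y}[\cdot]=1$ as an integral. The remaining algebra is routine.
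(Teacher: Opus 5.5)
The paper gives no proof of this theorem; it simply asserts the table after the construction. Your row-by-row substitution into the block definitions and the first equation of \eqref{eq:linear_system} is correct and supplies the missing verification. That includes the measure-for-cardinality reading that the RPwithPrior row also relies on tacitly.

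A few points should be made explicit.

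(i) For RRonBins you need $\Phi$ to be onto $\tilde{\calY}$, which you assume when you write $\Phi(\calS)=\tilde{\calY}$. Otherwise RRonBins puts mass $1/(e^\epsilon+|\tilde{\calY}|-1)$ on bins outside $\Phi(\calS)$, and the tabulated configuration with $\tilde{\calS}_2=\emptyset$ cannot do that. In that case take $\tilde{\calS}_2=\tilde{\calY}\setminus\Phi(\calS)$ and $l=0$ instead; then \eqref{eq:beta_gamma} gives $\beta=\gamma=1/(e^\epsilon+|\tilde{\calS}|-1)$.

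(ii) Relabeling bins by representatives gives $\tilde{\calS}\subseteq\calS$, but not $y\in B(y)$ for non-representative $y$, so rely on your second option. The requirement that $B$ map into $\tilde{\calS}$ is equally violated in the RRWithPrior and RPwithPrior rows, since $B(y)\not\subseteq\tilde{\calS}$ for $y\in\calS_2$. The clean statement is that in every row of the table only the restriction of $B$ to $\calS_1$ affects the output distribution.

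(iii) With $|\tilde{\calS}_2|=0$ and $l=|\tilde{\calS}_1|$, the second equation of \eqref{eq:linear_system} literally reads $(e^\epsilon-1)|B(y)|\gamma=0$, i.e.\ $\gamma=0$, matching $\gamma_1=0$ in \eqref{eq:beta_gamma}. It does not read $0\cdot\beta=0$ as you wrote. This is harmless for the output distribution, but it matters for RPwithPrior. There you must use the block definition rather than the literal branches of Algorithm~\ref{alg:algorithm1}. For $y\notin\mathcal{I}$ within distance $\delta$ of $\mathcal{I}$, $N_y\cap N_{\mathcal{I}}\neq\emptyset$, and the branch assigning $e^\epsilon\gamma$ on $B(y)$ would give density $0$ there. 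Because the bottom-right block $\calS_2\times\tilde{\calS}_2$ is empty, $\Delta=\tilde{\calS}$ governs all of $N_{\mathcal{I}}$. That is what your block-uniform step implicitly uses, and it should be said.
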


\begin{remark}
For RR and RRonBins, we can also understand as the following configurations:
\begin{table}[H]
    \centering
    \caption{Unified schemes under various parameter configurations.}\vspace{-2mm}
    \label{tab:unifty1}
    \begin{tabular}{c|cccccc}
    \toprule
    \textbf{Method} & $\calS_1$ & $\calS_2$ & $\tilde{\calS}_1$ &$\tilde{\calS}_2$ & $B$ & $l$ \\
    \midrule
    \textbf{RR} &$\calS_1$ & $\calS_2$ & $B(\calS_1)$ & $B(\calS_2)$ &  $I$ & 0 \\
    \textbf{RRonBins} & $\calS_1$ & $\calS_2$ & $B(\calS_1)$ & $B(\calS_2)$ & $\Phi$ & 0 \\
    \bottomrule
    \end{tabular}
    \end{table}
\end{remark}
Finally, we will prove the property of label differential privacy, whose proof is deferred to Appendix~\ref{appendix:labeldp-proof}.
\begin{theorem}\label{thm:labeldp}
BlockRR$_\epsilon$ is \(\epsilon\)-Label DP.
\end{theorem}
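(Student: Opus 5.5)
The approach is to reduce $\epsilon$-label DP to a pointwise likelihood-ratio bound. BlockRR perturbs each label independently, and datasets $D,D'$ that differ in a single label differ in exactly one call to the mechanism. So it suffices to show that for all true labels $y,y'\in\calS$ and every output $\tilde y\in\tilde\calS$,
\begin{equation*}
\Pr(\tilde Y=\tilde y\mid Y=y)\le e^\epsilon \Pr(\tilde Y=\tilde y\mid Y=y').
\end{equation*}
Summing this over $\tilde y$ in any output set $S$ then gives the definition with $\delta=0$, by the usual parallel/post-processing argument for local randomizers. Equivalently, within each column $\tilde y$ of the transition matrix, I will show that the largest entry is at most $e^\epsilon$ times the smallest.

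I will split into three cases according to which output block contains $\tilde y$, and list the possible entries of that column.
\begin{itemize}
\item[(i)] $\tilde y\in\tilde\calS_2$. Rows $y\in\calS_1$ give $\gamma$, and rows $y\in\calS_2$ give $e^\epsilon\gamma$ or $\gamma$. The ratio is at most $e^\epsilon$.
\item[(ii)] $\tilde y\in\tilde\calS_1\setminus\Delta$. Rows in $\calS_1$ give $e^\epsilon\beta$ or $\beta$, and rows in $\calS_2$ give $\beta$. The ratio is at most $e^\epsilon$.
\item[(iii)] $\tilde y\in\Delta$. Rows in $\calS_1$ give $e^\epsilon\beta$ or $\beta$, and rows in $\calS_2$ give $1/|\tilde\calS|$. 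The claim follows once I prove $\beta\le 1/|\tilde\calS|\le e^\epsilon\beta$.
\end{itemize}
I also need $\beta,\gamma\ge 0$, which should follow from the explicit formulas in \eqref{eq:beta_gamma}. For brevity, write $b=(e^\epsilon-1)|B(y)|$, $n=|\tilde\calS|$, $n_1=|\tilde\calS_1|$ and $n_2=|\tilde\calS_2|$.

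For the lower bound $1/n\le e^\epsilon\beta$, I use the $\calS_1$ row of \eqref{eq:linear_system}: $(b+n_1)\beta+n_2\gamma=1$. Lemma~\ref{lem:temper} gives $\gamma\le\max_l\gamma=\min_l\beta\le\beta$. Hence $1\le(b+n)\beta$. Since $|B(y)|\le n$, we have $b+n\le e^\epsilon n$, and therefore $1\le e^\epsilon n\beta$.

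For the upper bound $\beta\le 1/n$, Lemma~\ref{lem:temper} says $\beta$ is increasing in $l$, so it suffices to check the extreme case $l=n_1$. There $\beta_1=b+n_1n_2/n$ and $\kappa=(b+n_1)(b+n_2)$, so $n\beta\le1$ becomes
\begin{equation*}
nb+n_1n_2\le b^2+b(n_1+n_2)+n_1n_2,
\end{equation*}
which holds because $b^2\ge0$ and $n_1+n_2=n$.

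I expect the main obstacle to be case (iii), specifically the upper bound $\beta\le 1/|\tilde\calS|$. It is the only comparison between the fixed value $1/|\tilde\calS|$ and the solved parameters, and it is where the monotonicity from Lemma~\ref{lem:temper} is genuinely needed. A minor bookkeeping point also needs care: for $y\in\calS_2$, the sets $B(y)$ and $\Delta$ must be consistent with the block structure. I will interpret Algorithm~\ref{alg:algorithm1} so that every entry in an $\calS_2$ row is one of $1/|\tilde\calS|$, $e^\epsilon\gamma$ or $\gamma$, which is exactly what the case analysis above uses.
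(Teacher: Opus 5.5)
Your reduction to a columnwise likelihood-ratio bound, your split by output block, and your identification of $\beta\le 1/|\tilde\calS|\le e^\epsilon\beta$ as the crux all match the paper's proof. The only difference is how you verify that inequality.

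The paper substitutes the closed forms \eqref{eq:beta_gamma} and simplifies. In your notation $\kappa=b^2+bn+ln_2$ and $n\beta_1=bn+ln_2$, so $\kappa-n\beta_1=b^2$ and $e^\epsilon n\beta_1-\kappa=(e^\epsilon-1)^2|B(y)|(n-|B(y)|)+(e^\epsilon-1)ln_2$. Both are nonnegative for every admissible $l$.

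You instead get the lower bound from the normalization of an $\calS_1$ row together with $\gamma\le\beta$. In effect: an $\calS_1$ row has $n$ entries, each at most $e^\epsilon\beta$, summing to $1$. This reading avoids $\kappa$ altogether. You get the upper bound from monotonicity in $l$ plus a check at $l=n_1$. Both steps are valid.

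However, monotonicity is not ``genuinely needed'' for the upper bound. The identity $\kappa-n\beta_1=b^2$ holds for every $l$, so your $l=n_1$ computation works verbatim at any $l$. Likewise, $\gamma\le\beta$ is immediate from $\beta_1-\gamma_1=lb/n\ge 0$. The nonnegativity of $\gamma$, which you deferred, follows from $n\gamma_1=b(n-l)+ln_2\ge 0$.

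Using these identities is safer than citing Lemma~\ref{lem:temper}. The paper's proof of that lemma asserts that the denominator of $\beta$ decreases in $l$, which is false, since $\kappa$ increases in $l$. The lemma's conclusions still hold, e.g.\ $\partial_l\beta=n_2b^2/(n\kappa^2)\ge 0$, but the argument you would be importing needs repair.

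One thing you must fix. Your closing sentence says every entry of an $\calS_2$ row is $1/|\tilde\calS|$, $e^\epsilon\gamma$ or $\gamma$, yet case (ii) correctly uses $\beta$ for $\calS_2$ rows on $\tilde\calS_1\setminus\Delta$.

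The literal reading of Algorithm~\ref{alg:algorithm1}, which puts $\gamma$ there, cannot be the intended mechanism. Every column $\tilde y\in\tilde\calS_1\setminus\Delta$ contains an entry $e^\epsilon\beta$ from some $y\in\calS_1$ with $\tilde y\in B(y)$, because $\tilde\calS_1=\tilde\calS\cap B(\calS_1)$. Moreover $\beta-\gamma=lb/(n\kappa)>0$ for $l>0$. So when $0<l<|\tilde\calS_1|$ and $\calS_2\neq\emptyset$, a $\gamma$ entry in that column breaks the $e^\epsilon$ bound, and the rows would no longer satisfy \eqref{eq:linear_system}.

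State explicitly that you use the block description from the text. For $y\in\calS_2$ that means $1/|\tilde\calS|$ on $\Delta$, $\beta$ on $\tilde\calS_1\setminus\Delta$, $e^\epsilon\gamma$ on $B(y)$, and $\gamma$ on $\tilde\calS_2\setminus B(y)$. Also state the tacit assumption $B(y)\subseteq\tilde\calS_2$ for $y\in\calS_2$, which the second equation of \eqref{eq:linear_system} presupposes.
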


\subsection{Partition by Weight Matrix}\label{subsec:partition}

This section presents a method for partitioning the true label space, based on a weight matrix derived from the label prior distribution. In practice, the prior distribution $\mathbf{p}$ of the dataset is often unknown. We therefore adopt a two-stage approach: first, a subset of the training data is used to estimate the prior, the remaining data are then used for model training. Specifically, we split the dataset $D$ into two disjoint parts $D_1$ and $D_2$ randomly, and obtain the prior estimate $\mathbf{p}=[p_1\ \cdots\ p_K]^\top$ via Algorithm~\ref{alg:estimating_probability}, where $K$ is the number of classes, which corresponds to the number of bins in RRonbins and the number of intervals in RPWithPrior, respectively. 

Given the prior, we define the $(i,j)$-th entry of the weight matrix as
\begin{equation}\label{eq:weight}
    w_{ij}=p_j\exp\left(-\frac{\bbI(i\ne j)}{\sigma}\right),
\end{equation}
where $\sigma>0$ is a hyperparameter and $\bbI(\cdot)$ is an indicator function. Using the weight matrix, we partition the full label set $\calS$ into two disjoint subsets: $\calS_1=\{i:w_{ii}\ge w_{ij}\}$ and $\calS_2=\calS\setminus\calS_1$. Once the partition of $\calS$ is obtained, we can apply the BlockRR algorithm to randomize the target dataset. Algorithm~\ref{alg:weight} details the steps for generating the randomized dataset.

\begin{algorithm}[!htb]
\caption{Generation of the randomized dataset.}
\label{alg:weight}
\begin{algorithmic}[1]
\State {\bfseries Input:} $\calS$, $\tilde{\calS}$, dataset $D$, privacy parameter $\epsilon$, cardinality $l$ and  mapping $B$.
\State Split the dataset $D$ into two disjoint parts $D_1$ and $D_2$ randomly,
\State Prior estimation by $D_1$ with privacy budget $\epsilon$ via Algorithm \ref{alg:estimating_probability},
\State Compute weight matrix $W=(w_{ij})$ from Equation \eqref{eq:weight},
\State $\calS_1=\{i:w_{ii}\ge w_{ij},\forall j\in\tilde{\calS}\}$ and $\calS_2=\calS\setminus\calS_1$,
\State $\tilde{\calS}_1 = \tilde{\calS} \cap B(\calS_1)$ and $\tilde{\calS}_2=\tilde{\calS}\setminus \tilde{\calS}_1$, 
\State Compute $\Delta$ as the set of indices corresponding to the $l$ largest values of prior in $\tilde{\calS}_1$,
\State $\tilde{D}_2=\emptyset$,
\For{$(x,y)\in D_2$}
\State $\tilde{y}=\mathrm{BlockRR}_\epsilon(y)$,
\State $\tilde{D}_2=\tilde{D}_2\cup(x,\tilde{y})$.
\EndFor
\State {\bfseries Output:} $\tilde{D}_2$.
\end{algorithmic}
\end{algorithm}
Finally, we will claim that Algorithm \ref{alg:weight} is $\epsilon$-label DP, and defer the proof to Appendix \ref{appendix:proof-theo3.5}. 
\begin{theorem}\label{theo:algo-whole}
    Algorithm \ref{alg:weight} is also $\epsilon$-label DP.
\end{theorem}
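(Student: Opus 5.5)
We prove Theorem~\ref{theo:algo-whole} by treating Algorithm~\ref{alg:weight} as a composition of two mechanisms acting on disjoint parts of the data, and then applying the parallel composition principle. We view the overall output as a pair $(\mathbf{p}, \tilde{D}_2)$, or as $\tilde{D}_2$ together with the data-dependent choices of $\calS_1,\calS_2,\tilde{\calS}_1,\tilde{\calS}_2,\Delta$. Fix two datasets $D,D'$ that differ in the label of a single example $(x,y)$ versus $(x,y')$. Since the random split into $D_1,D_2$ depends only on the indices of the examples and not on their labels, we condition on a realization of the split and use the same split for $D$ and $D'$. It then suffices to show the $e^\epsilon$ bound conditionally on every split. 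Averaging over the split preserves the inequality, because both sides are linear in the split distribution.

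Conditioned on the split, the differing example lies either in $D_1$ or in $D_2$; we treat the two cases separately.

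\textbf{Case 1: the differing example lies in $D_1$.} Then $D_2=D_2'$. The only dependence on the differing label is through the prior estimate $\mathbf{p}$. This estimate is produced by Algorithm~\ref{alg:estimating_probability} with budget $\epsilon$, which we take as given to be $\epsilon$-label DP. Every subsequent step is a (randomized) function of $\mathbf{p}$ and of $D_2$, which is identical for $D$ and $D'$. These steps are computing $W$ via \eqref{eq:weight}, forming the partitions, choosing $\Delta$, and applying BlockRR to each element of $D_2$. By the post-processing property of DP, the full output therefore satisfies the $e^\epsilon$ bound.

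\textbf{Case 2: the differing example lies in $D_2$.} Then $D_1=D_1'$, so $\mathbf{p}$ has the same distribution under $D$ and $D'$. We condition further on $\mathbf{p}$, which fixes $W$, $\calS_1,\calS_2,\tilde{\calS}_1,\tilde{\calS}_2$, $\Delta$, and hence the parameters $\beta,\gamma$ from \eqref{eq:beta_gamma}. The randomized labels in $\tilde{D}_2$ are generated independently per example. Their joint probability therefore factorizes, and all factors coincide except the one for the differing example. For that factor, Theorem~\ref{thm:labeldp} gives the ratio bound $\Pr[\mathrm{BlockRR}_\epsilon(y)=\tilde y]\le e^\epsilon\Pr[\mathrm{BlockRR}_\epsilon(y')=\tilde y]$ for the fixed partition. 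Multiplying through and integrating over $\mathbf{p}$, which has the same law in both worlds, yields the bound for any event $S$.

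Since each case gives factor $e^\epsilon$ rather than $e^{2\epsilon}$, parallel composition gives $\epsilon$-label DP overall. The main point requiring care is the legitimacy of conditioning in Case 2. We must make sure Theorem~\ref{thm:labeldp} holds uniformly for every partition that the data-dependent construction can produce, including degenerate ones such as $\tilde{\calS}_2=\emptyset$ or $\calS_2=\emptyset$. We must also make sure that the output includes no quantity, such as the chosen partition, that leaks information about $D_2$. We would check that the partition is a function of $\mathbf{p}$ alone, which is what makes both case arguments valid.
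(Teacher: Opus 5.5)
Your proof is correct and uses the same idea as the paper's: parallel composition of the Laplace prior estimator on $D_1$ with BlockRR on $D_2$. Because the two stages act on disjoint records, the total budget is $\epsilon$ rather than $2\epsilon$. Your case split is more careful than the paper's one-line appeal to Lemma~\ref{lem:parallel}:
\begin{itemize}
\item When the changed label lies in $D_1$, you use post-processing.
\item When it lies in $D_2$, you condition on $\mathbf{p}$ and factorize.
\item You take the random split to be label-independent, so the changed record lands in the same part for $D$ and $D'$.
\end{itemize}
Lemma~\ref{lem:parallel} as stated covers only fixed, non-adaptive mechanisms $M_i$ on a partition of the data domain. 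It therefore does not by itself handle the fact that the partition, $\Delta$, $\beta$ and $\gamma$ used by BlockRR depend on the output $\mathbf{p}$ of the first stage, which your conditioning argument does.
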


\section{Main Idea of BlockRR}\label{sec:main_idea}
Traditional Randomized Response applies a one-size-fits-all perturbation strategy to all labels, disregarding dataset-specific structures during randomization. To better account for data distribution characteristics, Ghazi et al. \cite{ghazi2021deep} introduced the RRWithPrior algorithm, which incorporates prior information to adjust the randomization mechanism. A key limitation of this method, however, is its discarding of minority labels, making it unsuitable for tasks where per-class accuracy matters. To address these shortcomings, we propose the BlockRR algorithm, which introduces configurable, block-wise randomization mechanisms tailored to different regions of the label space.

The design of a suitable randomized mechanism reduces to specifying the conditional probability values $p_{\tilde{y} \mid y}$. To encode dataset-specific structure, we introduce a weight for each such conditional probability. These weights are organized into a matrix, where the entry corresponding to $(y,\tilde{y})$ is defined as follows:
\[w_{y\tilde{y}}=p_{\tilde{y}} \exp\left(-\frac{\bbI(\tilde{y}\ne y)}{\sigma}\right).\]
When the weight matrix is diagonally dominant, a standard RR mechanism is a suitable choice. However, when the weight matrix is not diagonally dominant, we partition $\calS$ into two subsets: $\calS_1=\{i:w_{ii}\ge w_{ij},\forall j\}$ and $\calS_2=\calS\setminus\calS_1$. For labels in $\calS_2$, the condition $w_{ii}\ge w_{ij},\forall j$ does not hold. Yet we still aim to keep $p_{y \mid y}$ as large as possible for all $ y\in\calS_2$ to maintain training performance. To this end, we introduce a subset $\Delta\subseteq\tilde{\calS}_1$ as the set of indices corresponding to the $l$ largest values of $\{p_i\}_{i\in\tilde{\calS}_1}$ and impose the constraint $p_{\tilde{y} \mid y}=\frac{1}{|\tilde{\calS}|},\forall y\in\calS_2$ when $\tilde{y}\in\Delta$.

We now formulate the following optimization problem:
\begin{equation}\label{eq:model}
\begin{split}
\max  &\sum_{y \in \calS,\tilde{y}\in\tilde{\calS}} w_{y\tilde{y}}p_{\tilde{y} \mid y} \\
\hbox{s.t.}&\left\{ 
\begin{array}{llll}
     p_{\tilde{y} \mid y} \leq e^\epsilon \cdot p_{\tilde{y} \mid y'}, &\forall \tilde{y}\in\tilde{\calS},\forall y, y' \in \calS, \\
    p_{\tilde{y} \mid y} = \frac{1}{|\tilde{\calS}|}, &\forall y \in \calS_2,\forall \tilde{y} \in \Delta, \\
    \sum_{\tilde{y} \in \tilde{\calS}} p_{\tilde{y} \mid y} = 1,&\forall y \in \calS, \\
    p_{\tilde{y} \mid y} \geq 0,&\forall \tilde{y}\in\tilde{\calS},\forall y \in \calS,
\end{array}
\right.
\end{split}
\end{equation}

where the first constraint enforces the $\epsilon$-label differential privacy requirement, while the third and fourth constraints arise from the probabilistic nature of the conditional distribution (non-negativity and normalization).

From the first three constraints in \eqref{eq:model}, we have the following results.  For \(y \in \calS_2\),
\begin{align}\label{eq:bound_s2}
    1 &= \sum_{\tilde{y} \in \tilde{\calS}} p_{\tilde{y} \mid y} \nonumber\\
    &=  \sum_{\tilde{y}\in B(y)}p_{\tilde{y} \mid y} + \sum_{\tilde{y} \in \tilde{\calS} \setminus (\{B(y)\}\cup\Delta)} p_{\tilde{y} \mid y} + \sum_{\tilde{y} \in\Delta} p_{\tilde{y} \mid y} \nonumber \\
    & \geq \sum_{\tilde{y}\in B(y)}p_{\tilde{y} \mid y} + e^{-\epsilon} \sum_{\tilde{y} \in \tilde{\calS} \setminus (\{B(y)\}\cup\Delta)} p_{\tilde{y} \mid \tilde{y}} + \sum_{\tilde{y} \in\Delta} \frac{1}{|\tilde{\calS}|}. 
\end{align}
For \(y \in \calS_1\),
\begin{align}\label{eq:bound_s1}
    1 =& \sum_{\tilde{y} \in \tilde{\calS}} p_{\tilde{y} \mid y}  \nonumber\\
    =&  \sum_{\tilde{y}\in B(y)}p_{\tilde{y} \mid y} + \sum_{\tilde{y} \in \tilde{\calS}_1 \setminus \{B(y)\}} p_{\tilde{y} \mid y} + \sum_{\tilde{y} \in\tilde{\calS}_2} p_{\tilde{y} \mid y} \nonumber\\
     \geq& \sum_{\tilde{y}\in B(y)}p_{\tilde{y} \mid y} + e^{-\epsilon} \sum_{\tilde{y} \in \tilde{\calS}_1 \setminus \{B(y)\}} p_{\tilde{y} \mid \tilde{y}} +e^{-\epsilon} \sum_{\tilde{y} \in\tilde{\calS}_2} p_{\tilde{y} \mid \tilde{y}}. 
\end{align}
 The equations hold for both \eqref{eq:bound_s2} and \eqref{eq:bound_s1} if and only if $p_{\tilde{y} \mid y} = e^\epsilon \cdot p_{\tilde{y} \mid y'}$, for all $y\in\calS_1, \tilde{y}\in\tilde{\calS}\setminus B(y)$ and for all $y\in\calS_2,\tilde{y}\in\tilde{\calS}\setminus(B(y)\cup\Delta)$. \eqref{eq:bound_s2} and \eqref{eq:bound_s1} together imply we have a larger feasible region for the following optimization problem compared with \eqref{eq:model}:
\begin{align}\label{eq:model1}
\max   &\sum_{y \in \calS,\tilde{y}\in\tilde{\calS}} w_{y\tilde{y}}p_{\tilde{y} \mid y} \nonumber\\
\hbox{s.t.}&\left\{ 
\begin{array}{llll}
     \sum_{\tilde{y}\in B(y)}p_{\tilde{y} \mid y} + e^{-\epsilon} \sum_{\tilde{y} \in \tilde{\calS} \setminus (\{B(y)\}\cup\Delta)} p_{\tilde{y} \mid \tilde{y}}\\
      \hspace{2.5cm}+ |\Delta|\cdot\frac{1}{|\tilde{\calS}|}\le1, \forall y\in \calS_2, \\
   \sum_{\tilde{y}\in B(y)}p_{\tilde{y} \mid y} + e^{-\epsilon} \sum_{\tilde{y} \in \tilde{\calS} \setminus \{B(y)\}} p_{\tilde{y} \mid \tilde{y}}\\
   \hspace{1.5cm}+ e^{-\epsilon}\sum_{\tilde{y} \in\tilde{\calS}_2} p_{\tilde{y} \mid \tilde{y}}\le1,\forall y \in \calS_1, \\
    p_{\tilde{y} \mid y} \geq 0,\forall \tilde{y}\in\tilde{\calS},\ \forall y \in \calS.
\end{array}
\right.
\end{align}
 We know that the optimal solution of linear programming is typically found at the boundary. That is, The equations hold for both \eqref{eq:bound_s2} and \eqref{eq:bound_s1}, which implies we have the same optimal solution for both \eqref{eq:model} and \eqref{eq:model1}. To find the optimal solution of \eqref{eq:model1} is to solve the following linear system
 \begin{equation*}%
    \begin{cases}
            \sum_{\tilde{y}\in B(y)}p_{\tilde{y} \mid y} + e^{-\epsilon} \sum_{\tilde{y} \in \tilde{\calS} \setminus (\{B(y)\}\cup\Delta)} p_{\tilde{y} \mid \tilde{y}} \\
             \hspace{3cm}+ |\Delta|\cdot\frac{1}{|\tilde{\calS}|}=1,\ \forall y\in \calS_2, \\
   \sum_{\tilde{y}\in B(y)}p_{\tilde{y} \mid y} + e^{-\epsilon} \sum_{\tilde{y} \in \tilde{\calS}_1 \setminus \{B(y)\}} p_{\tilde{y} \mid \tilde{y}} \\
    \hspace{3cm}+ e^{-\epsilon}\sum_{\tilde{y} \in\tilde{\calS}_2} p_{\tilde{y} \mid \tilde{y}}=1,\ \forall y \in \calS_1. \\
        \end{cases}
\end{equation*}

Note that $y\in B(y),\forall y\in\calS$. When $p_{\tilde{y}\mid y}=p_{y\mid y},\forall \tilde{y}\in B(y)$ is a constant, we have
 \begin{equation*}
    \begin{cases}
            |B(y)|\cdot p_{y \mid y} + e^{-\epsilon} \sum_{\tilde{y} \in \tilde{\calS} \setminus (\{B(y)\}\cup\Delta)} p_{\tilde{y} \mid \tilde{y}} \\
             \hspace{3.5cm}+ |\Delta|\cdot\frac{1}{|\tilde{\calS}|}=1,\ \forall y\in \calS_2, \\
   |B(y)|\cdot p_{y \mid y} + e^{-\epsilon} \sum_{\tilde{y} \in \tilde{\calS}_1 \setminus \{B(y)\}} p_{\tilde{y} \mid \tilde{y}}\\
    \hspace{2.5cm}+ e^{-\epsilon}\sum_{\tilde{y} \in\tilde{\calS}_2} p_{\tilde{y} \mid \tilde{y}}=1,\ \forall y \in \calS_1. \\
        \end{cases}
\end{equation*}
Note that 
$p_{\tilde{y}\mid\tilde{y}}$ is symmetric about $\tilde{y}$ on $\tilde{\calS}_1$ and $\tilde{\calS}_2$, respectively. Set $p_{\tilde{y}\mid\tilde{y}}=e^\epsilon \cdot \beta,\forall \tilde{y}\in\tilde{\calS}_1$, and $p_{\tilde{y}\mid\tilde{y}}=e^\epsilon \cdot \gamma,\forall \tilde{y}\in\tilde{\calS}_2$, we have  
\begin{equation*}
    \begin{cases}
            (e^\epsilon |B(y)| + |\tilde{\calS}_1| - |B(y)| )\cdot \beta + |\tilde{\calS}_2| \cdot \gamma = 1,\ \forall y\in\calS_1, \\
            (|\tilde{\calS}_1| - |\Delta|) \cdot \beta + (e^\epsilon |B(y)| + |\tilde{\calS}_2| - |B(y)|) \cdot \gamma \\
            \hspace{5cm}= 1-\frac{|\Delta|}{|\tilde{\calS}|},\ \forall y\in\calS_2.
        \end{cases}
\end{equation*}

\section{Numerical Experiments}\label{sec:numerical}
In this section, we empirically evaluate our proposed label differential privacy mechanism on image classification tasks, comparing it with the baseline RR and RRWithPrior mechanisms under varying privacy budgets $\epsilon$. To better reflect real-world class imbalance, we generate two derived datasets from CIFAR-10 \cite{krizhevsky2009learning} via non-uniform sampling \cite{buda2018systematic}. All experiments use a ResNet-18 architecture \cite{he2016deep}, trained from scratch without pre-trained weights.

This section proceeds as follows: we first outline the experimental setups in Subsection~\ref{subsec:exp_setup}, covering datasets and experimental details. Subsection~\ref{subsec:experimental_results} then presents comparative results under varying privacy budgets, followed by ablation analyses in Subsection~\ref{subsec:ablation}.
\subsection{Experimental Setup}\label{subsec:exp_setup}

\subsubsection{Datasets}

The CIFAR-10 dataset contains 60,000 32$\times$32 color images, divided into 10 classes, with 50,000 images used for training (5,000 per class) and 10,000 images used for testing (1,000 per class). To evaluate those methods under varied label distributions, we perform non-uniform sampling on the original CIFAR-10 training set to construct two imbalanced variants. Each derived dataset preserves the original test set but modifies the number of training samples per class as follows:

\textbf{CIFAR-10$_1$}: Training samples per class (0–9):
$[5000,4900,4700,4600,4500,4800,1000,1500,1000,1500]$.
The test set for each class contains $\frac{1}{10}$ of its corresponding training count.

\textbf{CIFAR-10$_2$}: Training samples per class (0–9): \([5000,4900,4700,4600,4500,4800,600,500,700,400]\).
Similarly, the test set for each class is scaled to $\frac{1}{10}$ of its training count.
\subsubsection{Experimental Details}\label{appendix:param_setup}
Experiments are conducted over privacy budgets $\epsilon \in \{0.6,0.8,1.0, 2.0, 3.0, 4.0, 6.0, 8.0, \infty\}$, where $\epsilon = \infty$ represents the non-private baseline (training on original labels).

For RRWithPrior and BlockRR, the label prior is estimated from the training data via a cross‑validation (CV)‑like scheme: we reserve 1\% of the training set for prior estimation (Algorithm~\ref{alg:weight}) and use the remaining 99\% for model training. Since standard RR does not require priors, it is trained on the entire training set, ensuring a fair comparison where each method uses its maximal available data.

In optimization and training, All models are trained from scratch for 160 epochs with a batch size of 512 for $\epsilon > 0.6$ and for 110 epochs when $\epsilon = 0.6$. Training uses SGD with Nesterov momentum (0.9) \cite{bottou2010large} and a cross‑entropy loss \cite{hinton1986learning}. The learning rate warms up linearly from 0 to 0.4 over the first 15\% of epochs, then decays by a factor of 10 at 30\%, 60\%, and 90\% of total training. Weight decay ($10^{-4}$) is applied for regularization. To improve generalization, we employ Mixup \cite{zhang2017mixup} with $\alpha=8.0$. 

\subsection{Experimental Results}\label{subsec:experimental_results}

In the following, we quantitatively evaluate our proposed method on Cifar-10$_1$ and Cifar-10$_2$. The experimental parameter settings are provided in Appendix~\ref{appendix:extra_para}.

For Cifar-10$_1$, Tables \ref{tab:cifar10_1} and \ref{tab:cifar10_1(avg.acc)} present the quantitative comparisons of RR, RRWithPrior, and our proposed method across privacy budgets. Table \ref{tab:cifar10_1} reports overall test accuracy, while Table \ref{tab:cifar10_1(avg.acc)} provides the average of per-class test accuracy, where per-class accuracy is deferred to \cref{appendix:per-class-acc}. 

\begin{table}[!htb]
\centering

\caption{Test Accuracy (\%) on $\text{CIFAR-10}_1$ (best in bold, second-best underlined).}\vspace{-2mm}
\label{tab:cifar10_1}
\resizebox{0.48\textwidth}{!}{
\begin{tabular}{c|ccc}
\toprule
\textbf{Privacy Budget} & \textbf{RR} & \textbf{RRWithPrior} & \textbf{Ours} \\
\midrule
$\epsilon = 0.6$ & $\underline{39.75 \pm 1.44}$ & $37.74 \pm 4.06$ & $\mathbf{40.31 \pm 2.08}$ \\
$\epsilon = 0.8$ & $46.18 \pm 1.51$ & $\mathbf{47.54 \pm 4.32}$ & $\underline{47.11 \pm 1.43}$ \\
$\epsilon = 1.0$ & $53.66 \pm 2.18$ & $\underline{54.86 \pm 4.19}$ & $\mathbf{55.12 \pm 2.69}$ \\
$\epsilon = 2.0$ & $\underline{78.74 \pm 0.48}$ & $73.20 \pm 1.53$ & $\mathbf{78.86 \pm 0.56}$ \\
$\epsilon = 3.0$ & $\mathbf{87.67 \pm 0.50}$ & $83.35 \pm 1.93$ & $\underline{87.60 \pm 0.51}$ \\
$\epsilon = 4.0$ & $90.43 \pm 0.28$ & $\mathbf{91.12 \pm 0.24}$ & $\underline{90.77 \pm 0.33}$ \\
$\epsilon = 6.0$ & $91.99 \pm 0.22$ & $\mathbf{92.61 \pm 0.21}$ & $\underline{92.10 \pm 0.30}$ \\
$\epsilon = 8.0$ & $92.18 \pm 0.28$ & $\mathbf{92.72 \pm 0.37}$ & $\underline{92.36 \pm 0.18}$ \\
$\epsilon=\infty$ & $92.25 \pm 0.33$ & $\underline{92.68 \pm 0.22}$ & $\mathbf{92.87 \pm 0.16}$ \\
\bottomrule
\end{tabular}}
\end{table}
\begin{table}[!htb]
\centering
\caption{The average of per-class accuracy over 10 classes (\%) on $\text{CIFAR-10}_1$ (best in bold, second-best underlined).}\vspace{-2mm}
\label{tab:cifar10_1(avg.acc)}
\begin{tabular}{c|ccc}
\toprule
Privacy Budget & RR & RRWithPrior & Ours \\
\midrule
$\epsilon=0.6$ & $\mathbf{29.14}$ & 26.17 & $\underline{28.52}$ \\
$\epsilon=0.8$ & $\mathbf{37.25}$ & 33.07 & $\underline{35.40}$ \\
$\epsilon=1.0$ & $\mathbf{46.01}$ & 38.42 & $\underline{43.30}$ \\
$\epsilon=2.0$ & $\mathbf{75.96}$ & 50.08 & $\underline{74.03}$ \\
$\epsilon=3.0$ & $\mathbf{87.21}$ & 70.85 & $\underline{86.09}$ \\
$\epsilon=4.0$ & 89.37 & $\mathbf{90.19}$ & $\underline{89.80}$ \\
$\epsilon=6.0$ & $91.08$ & $\mathbf{91.81}$ & \underline{91.22} \\
$\epsilon=8.0$ & 91.42 & $\mathbf{91.59}$ & $\underline{91.53}$ \\
$\epsilon=\infty$ & 91.54 & $\underline{91.77}$ & $\mathbf{91.88}$ \\
\bottomrule
\end{tabular}
\end{table}
 
The experimental results in Tables \ref{tab:cifar10_1} and \ref{tab:cifar10_1(avg.acc)} illustrate method performance across various privacy budgets. As noted, all methods converge to similar performance at $\epsilon \ge 4.0$, consistent with our framework where $\calS_1 = \calS$ and $\calS_2 = \emptyset$, degrading all mechanisms to standard RR.

In the high-privacy regime ($\epsilon = 0.6,0.8, 1.0$), our BlockRR mechanism demonstrates a clear advantage, achieving accuracies of 40.31\% ($\epsilon=0.6$) and 55.12\% ($\epsilon=1.0$), respectively. For the average of per-class accuracy reported in \cref{tab:cifar10_1(avg.acc)}, our proposed mechanism is superior to RRWithPrior because it avoids effectively ``class collapse''—where some classes achieve near-zero accuracy—while maintaining competitive overall accuracy compared to baseline RR. Morover, our propsed method gets a better balance between test accuracy and the average of per-class accuracy. 

For moderate budgets ($\epsilon = 2.0, 3.0$), our method
achieves the best performance with smaller variance.

\begin{table}[!htb]
\centering
\caption{Test Accuracy(\%) on $\text{CIFAR-10}_2$ (best in bold, second-best underlined).}\vspace{-2mm}
\label{tab:cifar10_2}
\resizebox{0.48\textwidth}{!}{
\begin{tabular}{c|ccc}
\toprule
\textbf{Privacy Budget} & \textbf{RR} & \textbf{RRWithPrior} & \textbf{Ours} \\
\midrule
$\epsilon = 0.6$ & $42.55 \pm 1.71$ & $\underline{42.70 \pm 4.62}$ & $\mathbf{43.48 \pm 1.98}$ \\
$\epsilon = 0.8$ & $48.70 \pm 2.59$ & $\mathbf{52.12 \pm 3.57}$ & $\underline{51.01 \pm 1.29}$ \\
$\epsilon = 1.0$ & $55.82 \pm 2.33$ & $\mathbf{59.48 \pm 5.94}$ & $\underline{58.26 \pm 1.67}$ \\
$\epsilon = 2.0$ & $\underline{79.09 \pm 0.82}$ & $78.91 \pm 0.76$ & $\mathbf{79.14 \pm 0.44}$ \\
$\epsilon = 3.0$ & $\mathbf{87.52 \pm 0.32}$ & $84.05 \pm 1.93$ & $\underline{87.18 \pm 0.30}$ \\
$\epsilon = 4.0$ & $\underline{90.22  \pm 0.17}$ & $88.65 \pm 1.42$ & $\mathbf{90.36 \pm 0.33}$ \\
$\epsilon = 6.0$ & $91.95 \pm 0.28$ & $\mathbf{92.44 \pm 0.34}$ & $\underline{92.35 \pm 0.23}$ \\
$\epsilon = 8.0$ & $92.20 \pm 0.17$ & $\mathbf{92.67 \pm 0.19}$ & $\underline{92.58 \pm 0.26}$ \\
$\epsilon = \infty$ & $92.30 \pm 0.25$ & $\mathbf{92.92 \pm 0.34}$ & $\underline{92.58 \pm 0.23}$ \\
\bottomrule
\end{tabular}
}
\end{table}
\begin{table}[!htb]
\centering
\caption{The average of per-class accuracy over 10 classes (\%) on $\text{CIFAR-10}_2$ (best in bold, second-best underlined).}\vspace{-2mm}
\label{tab:cifar10_2(avg.acc)}
\begin{tabular}{c|ccc}
\toprule
Privacy Budget & RR & RRWithPrior & Ours \\
\midrule
$\epsilon=0.6$ & $\underline{27.43}$ & 27.24 & $\mathbf{27.82}$ \\
$\epsilon=0.8$ & 33.08 & $\mathbf{33.48}$ & $\underline{33.24}$ \\
$\epsilon=1.0$ & $\mathbf{45.95}$ & 38.42 & $\underline{43.30}$ \\
$\epsilon=2.0$ & $\mathbf{67.96}$ & 50.87 & $\underline{60.09}$ \\
$\epsilon=3.0$ & $\mathbf{81.67}$ & 54.94 & $\underline{80.90}$ \\
$\epsilon=4.0$ & $\mathbf{89.24}$ & 74.01 & $\underline{86.01}$ \\
$\epsilon=6.0$ & 88.88 & $\mathbf{91.81}$ & $\underline{89.30}$ \\
$\epsilon=8.0$ & 89.38 & $\mathbf{91.59}$ & $\underline{89.69}$ \\
$\epsilon=\infty$ & 89.08 & $\mathbf{90.16}$ & $\underline{89.69}$ \\
\bottomrule
\end{tabular}
\end{table}
\begin{table*}[!htb]
    \centering
    \caption{Ablation results of the parameter $l$ under different global privacy budgets $\epsilon$ (test accuracy \%).}\vspace{-2mm}
    \label{tab:delta-ablation}
    \resizebox{\linewidth}{!}{
    \begin{tabular}{c|ccccccc}
        \toprule
        $l$ & 0 & 1 & 2 & 3 & 4 & 5 & 6\\
        \midrule
        $\epsilon=0.6$   & $42.52 \pm 1.88$ & $42.78 \pm 2.47$& $\mathbf{43.48 \pm 1.98}$ & $41.52 \pm 2.46$ & $43.22 \pm 1.51$ & $42.95 \pm 1.48$ & $42.34 \pm 1.62$\\
        $\epsilon=0.8$   & $48.88 \pm 1.94$ & $48.79 \pm 1.98$ & $49.91 \pm 1.90$
        & $49.90 \pm 2.93$ & $49.44 \pm 2.31$ & $48.80 \pm 1.80$ & $\mathbf{51.01 \pm 1.29}$\\
        $\epsilon=1.0$   &$55.58 \pm 1.91$ & $55.65 \pm 1.25$ & $57.10 \pm 1.91$ & $56.40 \pm 1.59$ & $56.04 \pm 1.71$ &$\mathbf{58.26 \pm 1.67}$ & $56.92 \pm 2.85$\\
        $\epsilon=2.0$ & $79.03 \pm 0.79$ & $78.61 \pm 0.67$ & $78.76 \pm 0.78$ & $78.77 \pm 0.92$ & $78.54 \pm 0.74$ & $78.72 \pm 1.05$ & $\mathbf{79.14 \pm 0.44}$\\
        $\epsilon=3.0$   &$\mathbf{87.18 \pm 0.30}$ &$87.05 \pm 0.33$  &$86.99 \pm 0.48$  &$86.78 \pm 0.42$ &$86.59 \pm 0.39$  &$86.09 \pm 0.44$ &$85.94 \pm 0.35$ \\
        $\epsilon=4.0$  & $\mathbf{90.36  \pm 0.32}$ &$90.09 \pm 0.42$ & $89.99 \pm 0.27$ & $89.85 \pm 0.60$ & $89.62 \pm 0.38 $ & $88.96 \pm 0.35$ & $88.78 \pm 0.53$\\
        \bottomrule
    \end{tabular}
    }
\end{table*}
Next, we evaluate those mechanisms on $\text{CIFAR-10}_2$, reporting overall accuracy in Table~\ref{tab:cifar10_2} and average of per-class accuracy in Table~\ref{tab:cifar10_2(avg.acc)}, where per-class accuracy is deferred to \cref{appendix:per-class-acc}. In the high-privacy regime ($\epsilon \leq 1.0$), the stronger label imbalance allows both RRWithPrior and our method to outperform RR in overall accuracy. Crucially, our mechanism shows significantly lower variance than RRWithPrior, highlighting its stability. Furthermore, it effectively avoids the class collapse observed in RRWithPrior at low $\epsilon$ and maintains balanced per-class performance for all $\epsilon \geq 1.0$.

\subsection{Ablation Study}
\label{subsec:ablation}
In the ablation study, we aim to analyze the impact of the hyperparameters $l$ and $\sigma$ on the
performance of our proposed BlockRR mechanism. The following experiments are all conducted on Cifar-10$_2$.

\subsubsection{Effectness of \texorpdfstring{$l$}{l}}
\label{subsubsec:l}
As demonstrated in Subsection \ref{subsec:experimental_results}, the performance of our proposed mechanism is almost the same as the RR mechanism at $\epsilon > 4.0$, since most of the partitions become $\calS_1 = \calS$ and $\calS_2 = \emptyset$. Consequently, the hyperparameter $l$ only affect elements in $\calS_2$, its specific value becomes irrelevant in this regime. 

In the following, we conduct experiments across different privacy budgets $\epsilon \in \{0.6, 0.8, 1.0, 2.0, 3.0, 4.0\}$ and vary $l \in \{0,1,2,3,4,5,6\}$. The resulting test accuracies are reported in Table~\ref{tab:delta-ablation}.

As shown in Table~\ref{tab:delta-ablation}, test accuracy remains stable for high privacy budgets ($\epsilon \le 1.0$), fluctuating within 2\% as $l$ varies from 0 to 6. For instance, at $\epsilon = 0.6$, accuracy ranges from 41.52\% ($l=3$) to 43.44\% ($l=2$). From the viewpoint of the transition probability matrix, according to the definition of $l$ in our proposed method, smaller values of $l$ make the transition probabilities closer to those of RR (and when $l=0$ the mechanism reduces exactly to RR), while larger values of $l$ move the mechanism towards RRWithPrior. The fact that the optimal $l$ is not at the boundaries (e.g., $l=2$ for $\epsilon=0.6$) indicates that the best performance is achieved by an intermediate configuration, blending properties of both RR and RRWithPrior.

When $\epsilon$ increases to 3.0, overall accuracy  shows a mild negative trend with $l$, dropping from 87.18\% ($l=0$) to 85.94\% ($l=6$), a decrease of 1.2\%. This aligns with the results in Table~\ref{tab:cifar10_2} for $\epsilon = 3.0$: under higher privacy budgets, better performance is achieved when the transition matrix is closer to the standard RR mechanism, which itself is a special case of our proposed framework when $l=0$. We have a similar result for $\epsilon=4.0$.

\subsubsection{Sensitivity to \texorpdfstring{$\sigma$}{sigma}}
\label{subsubsec:sigma}
\begin{figure}[!htb]
    \centering
        \begin{minipage}{0.49\linewidth}
            \includegraphics[width=0.92\linewidth]{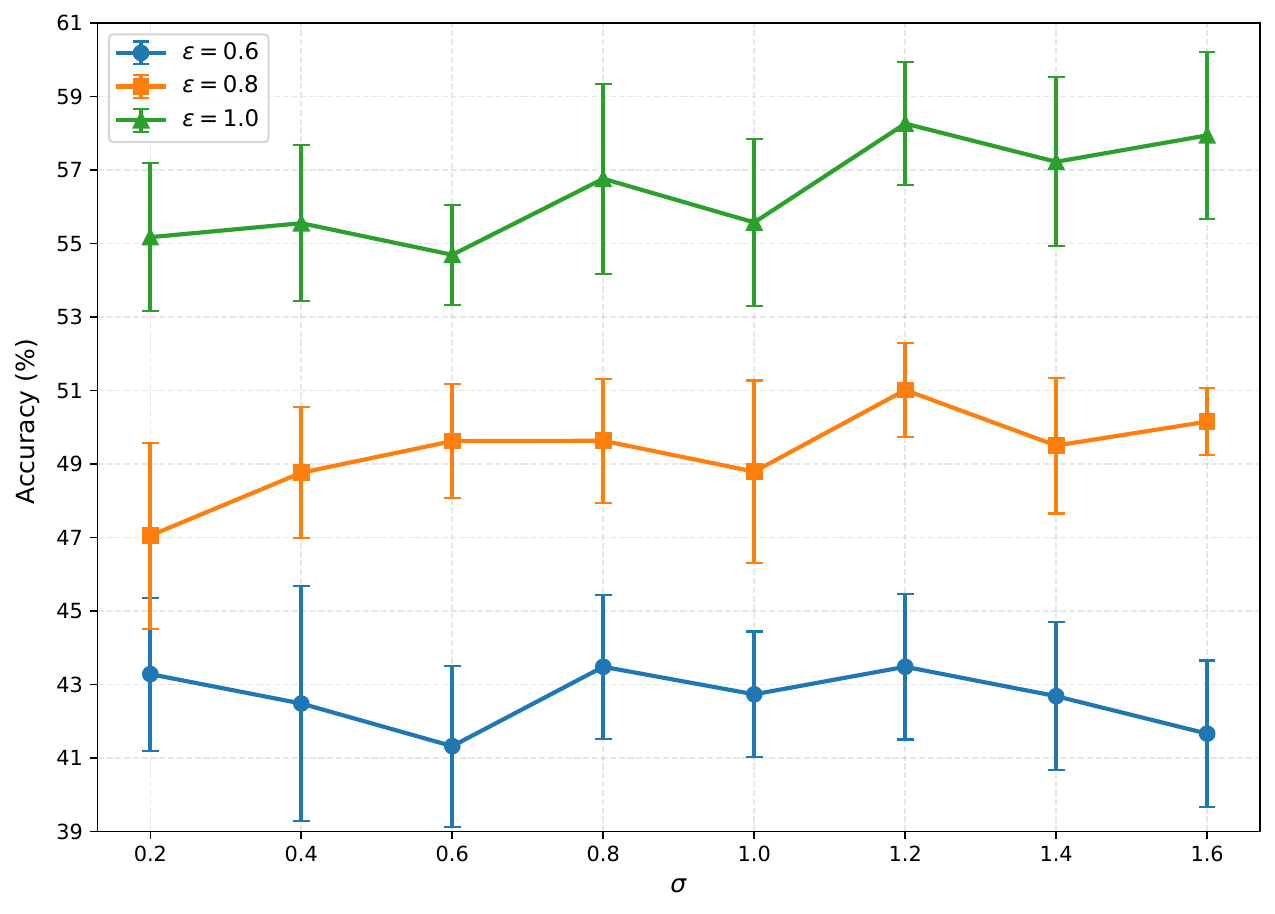}\vspace{-3mm}
        \caption{Accuracy (\%) versus $\sigma$ with $\epsilon\in\{0.6,0.8,1.0\}$.}
        \label{fig:acc_a}
        \end{minipage}
        \begin{minipage}{0.49\linewidth}
            \includegraphics[width=0.92\linewidth]{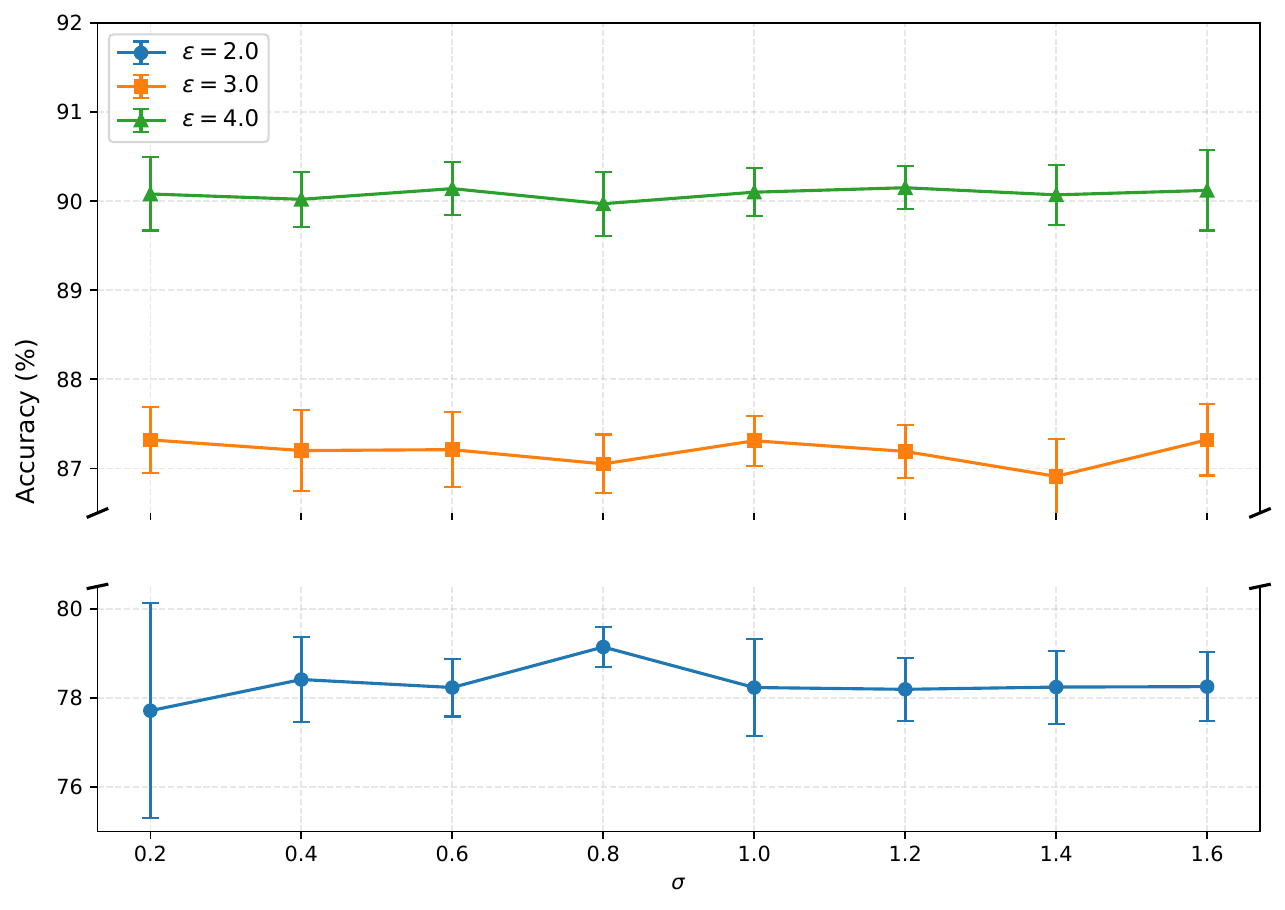}\vspace{-3mm}
        \caption{Accuracy (\%) versus $\sigma$ with $\epsilon\in\{2.0,3.0,4.0\}$.}
        \label{fig:acc_b}
        \end{minipage}
\end{figure}

We study the sensitivity of our method to the hyperparameter $\sigma$ by varying $\sigma \in [0.2, 1.6]$ and report the overall accuracy under different privacy budgets $\epsilon$. Overall, the performance is stable across a wide range of $\sigma$, indicating that our method is not overly sensitive to this hyperparameter.

In the strict privacy regime (\cref{fig:acc_a}, $\epsilon \le 1.0$), accuracy exhibits mild fluctuations as $\sigma$ changes. For all three budgets ($\epsilon=0.6, 0.8, 1.0$), very small $\sigma$ values tend to yield slightly lower accuracy, while increasing $\sigma$ to a moderate range generally improves performance and the best results are typically achieved around $\sigma =1.2$. 

In the moderate privacy budget regime (\cref{fig:acc_b}, $\epsilon \ge 2.0$), the curves become even flatter. For $\epsilon=3.0$ and $\epsilon=4.0$, accuracy remains almost constant across the entire $\sigma$ range, with variations that are comparable to the error bars. These results confirm that once the privacy budget is less restrictive, the choice of $\sigma$ has minimal impact on performance.

\section{Conclusion}
\label{sec:conclusion}

In this paper, we introduce BlockRR, a novel and unified RR-type mechanism for label differential privacy. This framework generalizes existed RR-type mechanisms as special cases under specific parameter settings. Theoretically, we prove that BlockRR satisfies $\epsilon$-label DP. We also design a partition method for BlockRR based on a weight matrix derived from label prior information; the parallel composition principle ensures that the composition of two such mechanisms remains $\epsilon$-label DP. Empirically, we evaluate BlockRR on two variants of CIFAR-10 ($\text{CIFAR-10}_1$ and $\text{CIFAR-10}_2$) with varying degrees of class imbalance. Results show that in the high-privacy regime ($\epsilon \leq 1.0$), our method significantly mitigates its class collapse and high variance issues. In the low-privacy regime ($\epsilon \geq 4.0$), all methods reduce BlockRR to standard RR (when $l=0$) without additional performance loss. In addition, our propsed method shows a better balance between test accuaracy and the average of per-class accuracy. While there is extensive theoretical work on label-flipping noise, a comprehensive understanding of block-wise flipping noise remains an important direction for future research.

\bibliographystyle{alpha}
\bibliography{ref}

\newpage
\appendix
\section{Related Work}\label{sec:related}

Over the past decade, the field of differential privacy has seen significant theoretical and practical advances. A cornerstone technique is RR \cite{warner1965rr}, introduced by Warner in 1965. Originally designed for anonymous surveys, its core idea is to allow participants to provide untruthful answers with a known probability when reporting sensitive data. This preserves individual privacy while still enabling accurate statistical analysis. With the formalization of the differential privacy framework, RR has become a crucial tool in DP research. Specifically, RR methods can be applied directly during data collection, preventing raw sensitive data from ever being exposed to a central server and thus providing a strong privacy guarantee \cite{10.1145/2660267.2660348}.

Additive noise mechanisms form another essential category of differential privacy methods. The Laplace Mechanism \cite{10.1007/11681878_14} achieves privacy by perturbing query outputs with Laplace-distributed noise. Similarly, the Gaussian Mechanism \cite{10.1561/0400000042} adds noise from a Gaussian distribution. A distinct variant, the Staircase Mechanism \cite{kairouz2014extremal}, ensures privacy by discretizing outputs into stepped values and injecting noise within this discrete structure.

Under the standard sample-level differential privacy (DP) framework, DP-ERM \cite{10.5555/1953048.2021036} enables the private training of regularized empirical risk minimization by the output perturbation or the objective perturbation. Initial approaches to private training focused on Stochastic Gradient Descent (SGD), privatizing each parameter update by injecting calibrated noise directly into the computed gradients \cite{6736861}. This method evolved into the now-standard DP-SGD algorithm, which enforces privacy by clipping individual per-sample gradients and adding Gaussian noise to their aggregated batch averages \cite{10.1145/2976749.2978318}. Beyond these foundational techniques, research has extended privacy mechanisms into deep learning architectures. For instance, in Convolutional Neural Networks (CNNs), randomization layers have been successfully integrated to perturb internal feature representations directly \cite{8894030}.

To further enhance privacy protection,  Chaudhuri \& Hsu \cite{pmlr-v19-chaudhuri11a} introduced the concept of Label Differential Privacy (Label DP), which specifically perturbs and protects only the data labels while keeping the features unchanged. Research into deep learning under this framework has advanced significantly. For instance,  Malek Esmaeili et al \cite{NEURIPS2021_37ecd276} proposed the ALIBI method, which integrates Laplace noise addition with Bayesian denoising, and also adapted the PATE framework to Label DP by combining it with semi-supervised learning \cite{papernot2017semisupervised}. In addition, Ghazi et al. \cite{ghazi2021deep} built upon Randomized Response (RR) techniques. They enhanced the RRTop-$k$ method by filtering out minority class samples and upsampling majority classes. Additionally, they introduced an RRWithPrior mechanism that leverages prior knowledge of the data distribution to mitigate the utility loss. 

Label DP has also gained significant attention in regression tasks. Expanding on their previous work, Ghazi et al. \cite{Ghazi2022RegressionWL} proposed RRonbins, a Label DP randomization mechanism based on global prior distributions. This method provides privacy by having labels ``randomly respond" to discretized value intervals (bins), and employs an efficient algorithm to determine the optimal bin size. Furthermore, they combined an optimal unbiased randomization scheme, where the perturbed regression labels preserves statistical unbiasedness \cite{NEURIPS2023_bd5d4366}.

\section{Proof of Lemma \ref{lem:temper}}\label{appendix:gamma-beta-proof}
\begin{proof}
Since the numerator of $\beta$ increases with $l$ and its denominator decreases with $l$, $\beta$ is monotonically increasing in $l$. From the expression for $\gamma$,
\[\gamma=\frac{1}{|\tilde{\calS}_2|}(1-(e^\epsilon |B(y)| + |\tilde{\calS}_1| - |B(y)| )\cdot \beta),\]
it follows that $\gamma$ is monotonically decreasing in $l$. Consequently, $\beta$ attains its minimum and $\gamma$ gets its maximum at $l=0$, and $\max_l \gamma = \min_l \beta$.
\end{proof}

\section{Proof of \texorpdfstring{$\epsilon$}{}-LabelDP}
\label{appendix:labeldp-proof}

\begin{proof}[Proof of Theorem \ref{thm:labeldp}.]
We only need to prove that for any $y, y' \in \calS$ and any $\tilde{y} \in \tilde{\calS}$, the following holds:
\begin{align}
    \Pr[\tilde{Y} = \tilde{y} \mid Y = y] \leq e^\epsilon \Pr[\tilde{Y} = \tilde{y} \mid Y = y'].
    \label{eq:labeldp}
\end{align}

For any $\tilde{y} \in \tilde{\calS}_1$ and any $y \in \calS$,
\[
\Pr[\tilde{Y} = \tilde{y} \mid Y = y] =
\begin{cases}
    e^\epsilon \cdot \beta, & \text{for } y \in \calS_1, \tilde{y} \in B(y), \\
    \beta, & \text{for } y \in \calS_1, \tilde{y} \in \tilde{\calS}_1 \setminus B(y) \text{ or } y \in \calS_2, \tilde{y} \in \tilde{\calS}_1 \setminus \Delta,\\
    \dfrac{1}{|\tilde{\calS}|}, & \text{for } y \in \calS_2, \tilde{y} \in \Delta,
\end{cases}
\]

where $\beta = \dfrac{(e^\epsilon  - 1)|B(y)| + \frac{l|\tilde{\calS}_2|}{|\tilde{\calS}|}}{(e^\epsilon|B(y)| + |\tilde{\calS}_1| - |B(y)|)(e^\epsilon |B(y)| + |\tilde{\calS}_2| - |B(y)|) - (|\tilde{\calS}_1| - l) |\tilde{\calS}_2|}$.

We only need to verify that $\beta \leq \dfrac{1}{|\tilde{\calS}|} \leq e^\epsilon \beta$. We have
\begin{align*}
    &\quad\;\; \dfrac{1}{|\tilde{\calS}|} - \beta \geq 0 \\
    &\Leftrightarrow \dfrac{1}{|\tilde{\calS}|} - \dfrac{(e^\epsilon - 1)|B(y)| + \frac{l|\tilde{\calS}_2|}{|\tilde{\calS}|}}{(e^\epsilon|B(y)| + |\tilde{\calS}_1| - |B(y)|)(e^\epsilon|B(y)| + |\tilde{\calS}_2| - |B(y)|) - (|\tilde{\calS}_1| - l) |\tilde{\calS}_2|} \geq 0 \\
    &\Leftrightarrow (e^\epsilon|B(y)| + |\tilde{\calS}_1| - |B(y)|)(e^\epsilon|B(y)| + |\tilde{\calS}_2| - |B(y)|) - (|\tilde{\calS}_1| - l) |\tilde{\calS}_2|\geq (e^\epsilon|B(y)| - |B(y)|) |\tilde{\calS}| - l |\tilde{\calS}_2|\\
    &\Leftrightarrow (e^\epsilon|B(y)| - |B(y)|)^2 \geq 0.
\end{align*}

Next, we check the other inequality:
\begin{align*}
    &\quad\;\; e^\epsilon \beta - \dfrac{1}{|\tilde{\calS}|} \geq 0 \\
    &\Leftrightarrow \dfrac{e^\epsilon \left( (e^\epsilon - 1)|B(y)| + \frac{l |\tilde{\calS}_2|}{|\tilde{\calS}|} \right)}{(e^\epsilon|B(y)| + |\tilde{\calS}_1| - |B(y)|)(e^\epsilon|B(y)| + |\tilde{\calS}_2| - |B(y)|) - (|\tilde{\calS}_1| - l) |\tilde{\calS}_2|} - \dfrac{1}{|\tilde{\calS}|} \geq 0 \\
    &\Leftrightarrow e^\epsilon \left( (e^\epsilon|B(y)| - |B(y)|) + \frac{l |\tilde{\calS}_2|}{|\tilde{\calS}|} \right) |\tilde{\calS}| \geq\\
    &\quad\;\;(e^\epsilon|B(y)| + |\tilde{\calS}_1| - |B(y)|)(e^\epsilon|B(y)| + |\tilde{\calS}_2| - |B(y)|) - (|\tilde{\calS}_1| - l) |\tilde{\calS}_2|\\
    &\Leftrightarrow  (e^\epsilon - 1) ^2(|\tilde{\calS}| - |B(y)|)|B(y)| + (e^\epsilon - 1) l |\tilde{\calS}_2| \geq 0,
\end{align*}
where the last inequality is from $\epsilon > 0$ and the set $\tilde{\calS}$ contains $B(y)$.

For any $\tilde{y} \in \tilde{\calS}_2$ (assume $\tilde{\calS}_2 \neq \emptyset$) and $y \in \calS$,
\[
\Pr[\tilde{Y} = \tilde{y} \mid Y = y] =
\begin{cases}
    e^\epsilon \cdot \gamma, & \text{for } y \in \calS_2, \tilde{y} \in B(y), \\
    \gamma, & \text{otherwise.}
\end{cases}
\]
where $$\gamma = \dfrac{(e^\epsilon|B(y)| + l - |B(y)|) - \frac{l}{|\tilde{\calS}|}(e^\epsilon|B(y)| + |\tilde{\calS}_1| - |B(y)|)}{(e^\epsilon|B(y)| + |\tilde{\calS}_1| - |B(y)|) \cdot (e^\epsilon|B(y)| + |\tilde{\calS}_2| - |B(y)|) - (|\tilde{\calS}_1| - l)|\tilde{\calS}_2|},$$ which clearly satisfies \eqref{eq:labeldp}. Thus, we conclude that $\mathrm{BlockRR}_\epsilon$ satisfies $\epsilon$-Label DP.
\end{proof}
\section{Proof of Theorem \ref{theo:algo-whole}}\label{appendix:proof-theo3.5}
\begin{proof}
    Both Algorithms \ref{alg:algorithm1} and \ref{alg:estimating_probability} satisfy $\epsilon$-label differential privacy. By the parallel composition principle stated in Lemma \ref{lem:parallel}, Algorithm \ref{alg:weight}, which composes these two mechanisms, also satisfies $\epsilon$-label differential privacy.
\end{proof}
\section{Laplace Mechanism for Estimating Probalility Distribution}
We begin by recalling the Laplace mechanism for probability distribution estimation, as described in \cite{Ghazi2022RegressionWL}.
\begin{algorithm}[H]
\caption{Laplace Mechanism for Estimating Probalility Distribution $M_{\epsilon}^{\mathrm{Lap}}$}\label{alg:estimating_probability}
\begin{algorithmic}[1]
\State {\bfseries Input:} Privacy parameter $\epsilon \geq 0$, $D_1 = \{(x_i, y_i)\}_{i=1}^n,$

\For{$(x,y) \in D_1$}
    \State $h_y \leftarrow \text{number of } i \text{ such that } y_i = y$,
    \State $h'_y \leftarrow \max\{h_y + \text{Lap}(2/\epsilon), 0\}$,
\EndFor

\State {\bfseries Output:} Distribution $\mathbf{p}$ over $\mathcal{{Y}}$ such that $p_y = \frac{h'_y}{\sum_{(x,y) \in D_1}h'_y}$.
\end{algorithmic}
\end{algorithm}

\section{Parallel Composition Theorem}
In the following, we recall the parallel composition theorem in differential privacy.

Let \( \{U_i\}_{i=1}^k \) be a partition of a domain $U$, i.e., the $U_i$ are pairwise disjoint and \( \cup_{i=1}^kU_i=U \). For any dataset $D\subseteq U$, the sets $\{D\cap U_i\}^k_{i=1}$ constitute a partition of $D$. Then, the parallel composition theorem can be stated as follows:
\begin{lemma}[Parallel Composition \cite{10.1145/1559845.1559850}]\label{lem:parallel}
Suppose that \( \{M_i: U \rightarrow R_i \}_{i=1}^k \) is a family of randomized algorithms, each of which satisfies $\epsilon$-DP, and that \( \{ U_i \}^{k}_{i=1} \) is a partition of the domain $U$. Then the combined mechanism $M(D) := \{M_i(D \cap U_i)\}_{i=1}^k$ still satisfies $\epsilon$-DP.
\end{lemma}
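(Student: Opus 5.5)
The statement to prove is the last one displayed: Lemma~\ref{lem:parallel}, the parallel composition theorem.

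The approach is the standard product-of-likelihood-ratios argument. Fix two datasets $D, D' \subseteq U$ that are adjacent, i.e.\ they differ in a single record. Since $\{U_i\}_{i=1}^k$ partitions $U$, the differing record lies in exactly one part, say $U_{j}$. Hence $D\cap U_i = D'\cap U_i$ for every $i\neq j$, while $D\cap U_j$ and $D'\cap U_j$ are adjacent.

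\textbf{Step 1: pointwise bound.} We may assume the mechanisms $M_i$ use independent internal randomness, so the joint output has product law. For a measurable product event $S = S_1\times\cdots\times S_k$, or a single outcome $(r_1,\dots,r_k)$ in the discrete case, we get
\[
\Pr[M(D)\in S]=\prod_{i=1}^k \Pr[M_i(D\cap U_i)\in S_i].
\]
Every factor with $i\ne j$ is identical for $D$ and $D'$. The $j$-th factor is bounded by $e^\epsilon$ times its counterpart, by $\epsilon$-DP of $M_j$ applied to the adjacent inputs $D\cap U_j, D'\cap U_j$. Multiplying gives $\Pr[M(D)\in S]\le e^\epsilon \Pr[M(D')\in S]$ on rectangles.

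\textbf{Step 2: extension to arbitrary events.} In the discrete setting, which is the one relevant here (labels and counts), we sum the pointwise inequality over the outcomes in $S$. In general, we compare densities: $\epsilon$-DP of $M_j$ means the density ratio of $M_j(D\cap U_j)$ to $M_j(D'\cap U_j)$ is at most $e^\epsilon$ almost everywhere. The joint density ratio equals that one factor, and integrating over $S$ gives the claim.

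\textbf{Expected obstacle.} The only delicate point is Step 2 for non-discrete outputs. One must check that the DP inequality on events gives a pointwise density-ratio bound, which holds via Radon--Nikodym since the two measures are mutually absolutely continuous. One must also check that independence of the $M_i$'s internal coins is part of the definition of $M$. I would state independence explicitly as an assumption of the combined mechanism.

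A second subtlety is which adjacency notion is in force. Under ``differ by one label'' (label DP), the changed record keeps its features, and it must still lie in a single part. This holds whenever the partition is determined by which record is involved (here, the random split into $D_1, D_2$) rather than by label values. Under that reading, the argument above applies verbatim with ``adjacent'' replaced by ``differing in one label''.
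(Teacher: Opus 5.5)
Your proposal is correct and is essentially the paper's argument: factor $\Pr[M(D)=r]$ as a product over the parts, note that only the part containing the changed record differs between $D$ and $D'$, and bound that one factor by $e^\epsilon$. The paper does this only for singleton outcomes $r$ and leaves implicit both the independence of the $M_i$ and the passage to arbitrary events. Your Step 2 and your remarks on independence and adjacency (the changed record, e.g.\ under a label change, must stay in a single part) therefore add rigor rather than a different route.
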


\begin{proof}
    For any adjacent datasets \( D, D' \subseteq U \), let \( D \cap U_i = D_i \) and \( D' \cap U_i = D'_i \). For any \( r_i \in R_i \), let $r = [r_1\  r_2\ \ldots\ r_k]^\top$. Then
    \[
        \Pr [M(D) = r] = \prod_{i=1}^{k} \Pr [M_i(D_i) = r_i].
    \]

    In particular, since \( D \) and \( D' \) are adjacent datasets, there is exactly one pair in \( \{(D_i, D'_i)\}_{i=1}^k \) that are adjacent (i.e., differ in exactly one record), while all the other pairs are equal. Hence,
    \begin{align*}
        \Pr [M(D) = r] 
        = \prod_{i=1}^{k} \Pr [M_i(D_i) = r_i] \leq e^\epsilon \prod_{i=1}^{k} \Pr [M_i(D'_i) = r_i] = e^\epsilon \cdot \Pr [M(D') = r]. 
    \end{align*}
    Therefore, the parallel composition satisfies $\epsilon$-DP.
\end{proof}

\section{Extra Results}
\label{appendix:extra results}

\subsection{Parameter Setup}\label{appendix:extra_para}
The specific privacy parameter settings for $\sigma$ and $l$ are reported in Tables~\ref{tab:dp_params1} and ~\ref{tab:dp_params2}.

\begin{table}[htbp]
  \centering
  \caption{Parameter settings for various privacy budgets \( \epsilon \) on CIFAR-10\(_1\)}\vspace{-3mm}
  \label{tab:dp_params1}
  \begin{tabular}{c|c c c c c c c c}
    \toprule
    $\epsilon$ & 0.6 & 0.8 & 1 & 2 & 3 & 4 & 6 & 8 \\
    $\sigma$  & 1.40 & 1.20 & 1.20 & 0.80 & 0.78 & 0.50 & 0.50 & 0.50 \\
    $l$       & 6 & 5 & 5 & 2 & 0 & 0 & -- & -- \\
    \bottomrule
  \end{tabular}
\end{table}

\begin{table}[htbp]
  \centering
  \caption{Parameter settings for various privacy budgets \( \epsilon \) on CIFAR-10\(_2\)}\vspace{-3mm}
  \label{tab:dp_params2}
  \begin{tabular}{c|c c c c c c c c}
    \toprule
    $\epsilon$ & 0.6 & 0.8 & 1 & 2 & 3 & 4 & 6 & 8 \\
    $\sigma$ & 1.20 & 1.20 & 1.20 & 0.80 & 0.80 & 0.50 & 0.20 & 0.20 \\
    $l$      & 2 & 6 & 5 & 6 & 0 & 0 & -- & -- \\
    \bottomrule
  \end{tabular}
\end{table}
It is worth noting that when $\epsilon \in \{6.0, 8.0\}$, we have $\mathcal{S}_2 = \emptyset$, making the choice of $l$ irrelevant. We therefore omit these results.

For evaluation, each configuration is run 10 times with different random seeds. Performance is averaged over the final 10 epochs, and we report the mean accuracy, standard deviation, and the average of per‑class accuracy, respectively. 
\subsection{Per-Class Test Accuracy}\label{appendix:per-class-acc}
Let $D_y$ denote a set of test samples belonging to class $y$, and $f$ be a classifier. Then the per-class accuracy is defined as
\[
ACC_y = \sum_{x \in D_y}\frac{\mathbb{I}(f(x)=y)}{|D_y|}.
\]
This subsection details the per-class accuracy results. \Cref{tab:cifar10_1(cls)} corresponds to the CIFAR-10$_1$ dataset, while \cref{tab:cifar10_2(cls)} corresponds to CIFAR-10$_2$.
\begin{table}[!htb]
\centering
\small
\setlength{\tabcolsep}{3pt}
\caption{Per-Class Test Accuracy (\%) on $\text{CIFAR-10}_1$ with Different Privacy Budgets.}\vspace{-3mm}
\label{tab:cifar10_1(cls)}
\begin{tabular}{|c|ccccc|ccccc|ccccc|}
\hline
\textbf{Budget} & \multicolumn{5}{c|}{\textbf{RR}} & \multicolumn{5}{c|}{\textbf{RRWithPrior}} & \multicolumn{5}{c|}{\textbf{Ours}} \\
\hline
\multirow{2}{*}{$\epsilon=0.6$} & 68.26,& 78.61,& 21.54,& 29.36,& 29.94,  & 73.33,& 70.91,& 29.90,& 11.39,& 28.00,  & 66.59,& 84.94,& 24.96,& 23.97,& 25.07,  \\
&42.31,& 1.56,& 13.05,& 0.85,& 5.95&48.14,& 0.00,& 0.00,& 0.00,& 0.00&51.92,& 0.00,& 7.29,& 0.00,& 0.42\\\hline
\multirow{2}{*}{$\epsilon=0.8$} & 69.65,& 79.94,& 38.91,& 29.20,& 30.19,  & 82.38,& 80.22,& 34.00,& 37.13,& 28.63,  & 71.65,& 80.46,& 37.95,& 33.57,& 36.99,  \\
& 52.94,& 8.77,& 32.66,& 6.05,& 24.19&68.35,& 0.00,& 0.00,& 0.00, &0.00&57.21,& 2.56,& 20.83,& 2.55,& 10.25\\\hline
\multirow{2}{*}{$\epsilon=1.0$} & 76.41,& 79.54,& 48.46,& 37.97,& 39.69,  & 77.70,& 84.28,& 59.66,& 51.41,& 46.22,  & 78.11,& 86.38,& 51.77,& 41.28,& 42.84,  \\
&58.47,& 19.61,& 48.13,& 11.12,& 40.69&64.89,& 0.00,& 0.00,& 0.00,& 0.00&65.15,& 10.27,& 27.55,& 3.75,& 25.94\\\hline
\multirow{2}{*}{$\epsilon=2.0$} & 89.53,& 96.58,& 73.18,& 67.34,& 73.16,  & 91.37,& 98.31,& 80.55,& 74.42,& 81.33,  & 90.57,& 97.23,& 76.82,& 68.99,& 74.34,  \\
&79.40,& 67.62,& 71.27,& 68.10,& 73.40&81.72,& 0.00,& 14.95,& 0.00, &8.13&79.20,& 61.13,& 67.64,& 59.53,& 64.82\\\hline
\multirow{2}{*}{$\epsilon=3.0$} & 93.24,& 98.77,& 86.12,& 80.28,& 85.56,  & 94.00,& 98.74,& 87.19,& 81.69,& 87.54,  & 93.02,& 98.84,& 85.90,& 79.72,& 85.81,  \\
&85.30,& 82.19,& 95.09,& 80.21,& 85.34&86.90,& 8.56,& 77.58,& 25.28, &61.01&85.86,& 82.13,& 85.35,& 80.21,& 84.06\\\hline
\multirow{2}{*}{$\epsilon=4.0$} & 94.80,& 99.13,& 89.18,& 84.93,& 88.81,  & 95.70,& 99.09,& 90.14,& 85.64,& 89.62,  & 94.86,& 99.18,& 89.38, &85.58,& 88.98,  \\
&87.81,& 86.29,& 90.37,& 83.59,& 88.80&88.36,& 87.81,& 91.14,& 85.61, &88.77&88.39,& 86.52,& 90.58,& 84.31,& 90.22\\\hline
\multirow{2}{*}{$\epsilon=6.0$} & 96.32,& 99.24,& 91.22,& 87.13,& 90.31,  & 96.39,& 99.26,& 91.98,& 88.43,& 91.46,  & 96.24,& 99.31,& 91.34, &87.43,& 90.26,  \\
&89.54,& 89.09,& 91.23,& 86.07,& 90.69&89.75,& 89.80,& 92.69,& 87.30, &91.03&89.65,& 89.25,& 91.86,& 85.77,& 91.08\\\hline
\multirow{2}{*}{$\epsilon=8.0$} & 96.48,& 99.26,& 91.39,& 86.87,& 90.77,  & 96.68,& 99.19,& 91.94,& 88.38,& 91.41,  & 96.25,& 99.33, &91.68, &88.15,& 91.17,  \\
&89.61,& 89.41,& 92.55,& 86.66,& 91.15&90.17,& 87.02,& 92.85,& 87.02,& 91.22&89.09,& 88.81,& 92.85,&86.64,& 91.32\\\hline
\multirow{2}{*}{$\epsilon=\infty$} & 96.28,& 99.29, &91.55,& 87.63,& 90.36,     & 96.81,& 99.34,& 91.87,& 88.77,& 91.32,     & 96.79,& 99.19,& 91.94,& 88.62,& 91.63, 
\\
&89.46,& 89.58,& 92.91,& 87.09,& 91.22&89.87,& 89.59,& 92.76,& 86.43, &90.90&89.84,& 89.84,& 92.76,& 87.03,& 91.14\\
\hline
\end{tabular}
\end{table}

\begin{table}[H]
\centering
\setlength{\tabcolsep}{3pt}
\caption{Per-Class Test Accuracy (\%) on $\text{CIFAR-10}_2$ with Different Privacy Budgets}\vspace{-3mm}
\label{tab:cifar10_2(cls)}
\begin{tabular}{|c|ccccc|ccccc|ccccc|}
\hline
\textbf{Budget} & \multicolumn{5}{c|}{\textbf{RR}} & \multicolumn{5}{c|}{\textbf{RRWithPrior}} & \multicolumn{5}{c|}{\textbf{Ours}} \\
\hline
\multirow{2}{*}{$\epsilon=0.6$} & 66.75,& 83.70,& 14.82,& 24.02,& 28.66,& 64.22,& 80.82,& 27.67, &27.08,& 31.60,  & 69.38,& 75.47,& 22.73, &28.48,& 33.65,  \\
&  52.25,& 1.52,& 0.86,& 1.24,& 0.50& 41.02,& 0.00,& 0.00,& 0.00,& 0.00 &47.64,& 0.02,& 0.20,& 0.33,& 0.32\\\hline
\multirow{2}{*}{$\epsilon=0.8$} & 70.24,& 74.33,& 39.97,& 32.83,& 34.34,  & 63.04,& 81.24, &46.45,& 45.17,& 46.77,  & 74.63,& 82.89,& 38.50,& 32.78,& 38.50, \\
& 57.08,& 5.62,& 7.90,& 4.97,& 3.55 & 52.14,& 0.00,& 0.00,& 0.00,& 0.00 &57.90,& 2.00, &4.56,& 0.14,& 0.50\\\hline
\multirow{2}{*}{$\epsilon=1.0$} & 76.41,& 79.54,& 48.46,& 37.39,& 39.69,  & 77.70,& 84.28, &59.66,& 51.41,& 46.22,  & 78.11,& 86.38,& 51.77,& 41.28,& 42.84, \\
& 58.47,& 19.61,& 48.13,& 11.12,& 40.69 & 64.89,& 0.00,& 0.00,& 0.00,& 0.00 &65.15,& 10.27, &27.55,& 3.75,& 25.94\\\hline
\multirow{2}{*}{$\epsilon=2.0$} & 91.09, &97.15,& 75.01,& 68.89,& 73.93,  & 92.65,& 98.36,& 80.58,& 75.03,& 80.78,  & 91.84,& 97.84,& 77.85,& 71.64,& 78.30,  \\
&79.86,& 58.75,& 39.26,& 60.04,& 35.60 &81.26,& 0.00,& 0.00,& 0.00,& 0.00 & 79.55,& 33.95,& 17.00,& 37.36,& 15.55\\\hline
\multirow{2}{*}{$\epsilon=3.0$} & 93.74,& 99.16,& 86.30,& 80.50, &86.09,  & 94.68,& 99.49,& 88.59,& 83.60,& 88.21,  & 93.59, &9892,& 85.86,& 80.86,& 85.40,  \\
& 85.73,& 75.28,& 59.98,& 76.20,& 73.70& 86.82,& 8.03,& 0.00,& 0.00,& 0.00& 85.41,& 75.03,& 58.06,& 76.64,& 69.47\\\hline
\multirow{2}{*}{$\epsilon=4.0$} & 95.01,& 99.65,& 91.42,& 87.40,& 91.65,  & 95.35,& 99.47,& 90.50, &85.69,& 89.64,  & 95.21,& 99.33,& 89.90, &85.34,& 88.87,  \\
&89.73, &85.27,& 83.24,& 84.49,& 84.50&88.77,& 58.22,& 28.48,& 58.63,& 45.37&88.03,& 82.18,& 69.46,& 83.20,& 78.55\\\hline
\multirow{2}{*}{$\epsilon=6.0$} & 96.41,& 99.62,& 91.08,& 87.40,& 90.65,  & 96.39,& 99.26,& 91.98,& 88.43,& 91.46,  & 97.29,& 99.52,& 91.85,& 87.32,& 91.42,  \\
&89.73,& 85.05,& 81.32,& 84.53,& 83.03&89.75,& 89.80,& 92.69,& 87.30, &91.03&89.88,& 86.57, &81.80,& 83.67,& 83.65\\\hline
\multirow{2}{*}{$\epsilon=8.0$} & 96.51,& 99.60,& 91.42,& 87.72,& 91.60,  & 96.68,& 99.19,& 91.94,& 88.38,& 91.41,  & 96.95,& 99.74,& 92.27,& 88.33,& 91.50,  \\
& 89.37,& 85.27,& 83.22,& 84.49,& 84.55&90.17,& 87.02,& 92.85,& 87.02, &91.22&89.70,& 86.20,& 83.56,& 84.89,& 83.80\\\hline
\multirow{2}{*}{$\epsilon=\infty$} & 96.81,& 99.75,& 91.43,& 88.00,& 91.43,  & 97.23,& 99.72,& 92.17,& 88.56,& 92.46,  & 97.07,& 99.63,& 91.98,& 88.28,& 91.64,  \\
& 89.74,& 85.53,& 81.58,& 84.93,& 81.60&90.24,& 87.42,& 85.02,& 85.56,& 83.22&89.90,& 86.78,& 84.20,& 84.44,& 83.02\\
\hline
\end{tabular}
\end{table}

\end{document}